\def\R{\mathbb{R}}
\def\E{\mathbb{E}}
\def\I{{\mathcal I}}
\def\F{{\mathcal F}}
\def\O{\mathcal O}
\def\bbx{\bar{\mathbf x}}
\def\bbg{\bar{\mathbf g}}
\def\e{\boldsymbol \epsilon}
\def\W{{\bf W}}
\def\X{{\bf X}}
\def\Y{{\bf Y}}
\def\G{{\bf G}}
\def\1{{\bf 1}}
\def\GM{{\mathcal G}}
\def\bx{{\mathbf x}}
\def\bz{{\mathbf z}}
\def\bg{{\mathbf g}}
\def\by{{\mathbf y}}
\def\bI{{\bf I}}
\newtheorem{assumption}{Assumption}
\newtheorem{lemma}{Lemma}
\newtheorem{theorem}{Theorem}
\newtheorem{corollary}{Corollary}
\newtheorem*{skeproof*}{Proof Sketch}
\begin{document}

%

%

\twocolumn[

\aistatstitle{Communication-efficient Decentralized Local SGD over Undirected Networks}

\aistatsauthor{Tiancheng Qin\And S. Rasoul Etesami\And César A. Uribe}

\aistatsaddress{ UIUC \\\texttt{tq6@illinois.edu} \And UIUC \\\texttt{etesami@illinois.edu} \And  MIT \\\texttt{cauribe@mit.edu} } ]

\begin{abstract}
We consider the distributed learning problem where a network of $n$ agents seeks to minimize a global function $F$. Agents have access to $F$ through noisy gradients, and they can locally communicate with their neighbors a network. We study the Decentralized Local SDG method, where agents perform a number of local gradient steps and occasionally exchange information with their neighbors. Previous algorithmic analysis efforts have focused on the specific network topology (star topology) where a leader node aggregates all agents' information. We generalize that setting to an arbitrary network by analyzing the trade-off between the number of communication rounds and the computational effort of each agent. We bound the expected optimality gap in terms of the number of iterates $T$, the number of workers $n$, and the spectral gap of the underlying network. Our main results show that by using only $R=\Omega(n)$ communication rounds, one can achieve an error that scales as $O({1}/{nT})$, where the number of communication rounds is independent of $T$ and only depends on the number of agents. Finally, we provide numerical evidence of our theoretical results through experiments on real and synthetic data.
\end{abstract}

\section{Introduction}\label{sec:intro}

Stochastic Gradient Descent  (SGD) is arguably the most commonly used algorithm for the optimization of parameters of machine learning models.  SGD tries to minimize a function $F$ by iteratively updating parameters as: $\mathbf{x}^{t+1}  = \mathbf{x}^{t} - \eta_t \hat{\mathbf{g}}^{t}$, 
where $\hat{\mathbf{g}}^{t}$ is a stochastic gradient of $F$ at $ \mathbf{x}^{t}$ and $\eta_t$ is the learning rate. However, given the massive scale of many modern ML models and datasets, and taking into account data ownership, privacy, fault tolerance, and scalability, decentralized training approaches have recently emerged as a suitable alternative over centralized ones, e.g., parameter server~\cite{dean2012large}, federated learning~\cite{konevcny2016federated,mcmahan2016federated,kairouz2019advances}, decentralized stochastic gradient descent  \cite{lian2017can,koloskova2019decentralized,assran2018asynchronous}, decentralized momentum SGD \cite{yu2019linear}, decentralized ADAM \cite{nazari2019dadam}, among many others \cite{lian2018asynchronous,koloskova2019decentralized,tang2018communication,lu2020moniqua,tang2019doublesqueeze}.

A naive parallelization of the SGD consists of having multiple workers computing stochastic gradients in parallel, with a central node, or fusion center, where local gradients are aggregated and sent back to the workers. Ideally, such aggregation provides an estimate of the true gradient of $F$ with a lower variance. However, such a structure induces a large communication overhead, where at each iteration of the algorithm, all workers need to send their gradients to the central node, and then the central node needs to send the workers the aggregated information. Local SGD (Parallel SGD or Federated SGD) \cite{stich2018local,zinkevich2010parallelized} presents a suitable solution to reduce such communication overheads. Specifically, each machine independently runs SGD locally and then aggregates by a central node from time to time only. Formally, in Local SGD, each machine runs $K$ steps of SGD locally, after which an aggregation step is made. In total, $R$ aggregation steps are performed. This implies that each machine computes $T=KR$ stochastic gradients and executes $KR$ gradient steps locally, for a total of $N = KRn$ over the set of $n$ machines or workers. We refer interested readers to \cite{wang2018cooperative,koloskova2020unified,lu2020mixml} for a number of recent unifying approaches for distributed SGD. 

The main advantage of the parallelization approach is that it allows distributed stochastic gradient computations. Such an advantage has been recently shown to translate into linear speedups with respect to the nodes available~\cite{spiridonoff2020local, koloskova2019decentralized}. However, linear speedups come at the cost of an increasing number of communication rounds. Recently, in~\cite{khaled2019tighter,stich2019error}, the authors showed that the number of communication rounds required is $\Omega(n \ \text{polylog}(T))$ for strongly convex functions and all agents getting stochastic gradients from the same function. Later in~\cite{spiridonoff2020local}, the authors showed that such communication complexity could be further improved to $\Omega(n)$, still maintaining the linear speedup in the number of nodes $n$.

This paper focuses on smooth and strongly convex functions with a general noise model, where agents have access to stochastic gradients of a function $ F $, but no central node or fusion center exists. Instead, agents can communicate or exchange information with each other via a network. This network imposes communication constraints because a worker or node can only exchange information with those directly connected.

\subsection{Related Work} Our work builds upon two main recent results on the analysis of Local SGD and Decentralized SGD, namely~\cite{koloskova2020unified}, and~\cite{spiridonoff2020local}. In~\cite{koloskova2020unified}, the authors introduced a unifying theory for decentralized SGD and Local updates. In particular, they study the problem of decentralized updates where agents are connected over an arbitrary network. Thus, there is no central or aggregating node, as in leader/follower architectures. Agents communicate with their local neighbors defined over a graph that imposes communication limits between the agents.
Moreover, time-varying networks are also allowed. On the other hand, in~\cite{spiridonoff2020local}, the authors propose a new analysis of Local SGD, where a fixed leader/follower network topology is used, and the central node works as a fusion center. However, \cite{spiridonoff2020local}~introduces a new analysis that shows that linear speedups with respect to the number of workers $n$ can be achieved, with a communication complexity proportional to $\Omega(n)$ only. Our work also relates to Cooperative SGD~\cite{wang2018cooperative}, where smooth non-convex functions are studied. However, no linear improvement is shown, and we provide better communication complexity. A similar problem was also studied in~\cite{li2019communication}, where Local Decentralized SGD was proposed, which allows for multiple local steps and multiple Decentralized SGD steps. Smooth non-convex functions with bounded variance, and heterogeneous agents are also studied.

Our proposed method is not a strict uniform improvement over~\cite{koloskova2020unified,spiridonoff2020local,wang2018cooperative,li2019communication}. Instead, we study a specific regime: arbitrarily fixed network architectures, which is more general than leader/follower topologies. However, fixed networks is a stronger assumption than the time-varying assumption used in~\cite{koloskova2020unified} where arbitrary changes are allowed. Also, we focus on strongly convex and smooth functions, whereas \cite{wang2018cooperative,li2019communication} study non-convex smooth functions, and~\cite{koloskova2019decentralized} studies convex and non-convex functions as well. Moreover, we focus on the homogeneous case, where all agents observe noisy gradients of the same function~\cite{shamir2014distributed,rabbat2015multi}. The more general non-IID data setup was studied in~\cite{li2019convergence,koloskova2020unified,woodworth2020minibatch}. Finally, we show that linear improvement can be achieved with a number of communication rounds that depends only on $n$, which improves the communication complexity of~\cite{koloskova2020unified}.

\subsection{Contributions and Organization}
We confirm the remark of~\cite{koloskova2020unified} that suggested that when all agents obtain noisy gradients from the same function, an improvement over the communication complexity can be made as the lower bound in~\cite{koloskova2020unified} becomes vacuous.  We study the case where the length of the local SGD update is fixed and where this length is increasing. In particular, we show that a linear speedup in the number of agents can be shown for a fixed number of communication rounds proportional to $n$, while only increasing the number of iterations for a general fixed network architecture ~\cite{spiridonoff2020local}. Moreover, we provide numerical experiments that verify our theoretical results on a number of scenarios and graph topologies.

The paper is organized as follows. Section~\ref{sec:problem} describes the problem statement, and assumptions. Section~\ref{sec:results} states our main results and proof sketches. Section~\ref{sec:numerics} provides numerical experiments. Finally, Section~\ref{sec:conclusions} describes concluding remarks and future work.

\textbf{Notation:}
For a positive integer number $n\in \mathbb{Z}_+$, we let $[n]\triangleq\{1, \ldots, n\}$. For an integer $z\in \mathbb{Z}$, we denote the largest integer less than or equal to $z$ by $\lfloor z\rfloor$. We let $\boldsymbol{1}_n$ be an $n$-dimensional column vector with all entries equal to one. A nonnegative matrix $\W=[w_{ij}] \in \R^{n {\times} n}_+$ is called doubly stochastic if it is symmetric and the sum of the entries in each row equals $1$. We denote the eigenvalues of a doubly stochastic matrix $\W$ by $1 = |\lambda_1| \ge |\lambda_2| \ge \cdots \ge |\lambda_n| \ge 0$ and its spectral gap by $1- \rho$, where $\rho\triangleq |\lambda_2|$ denotes the size of the second largest eigenvalue of $\W$. 

\section{Problem Formulation}\label{sec:problem}

Let us consider a decentralized system with a set of $[n]=\{1, \ldots, n\}$ workers. We assume that the workers are connected via a weighted connected undirected graph $\GM = ([n], \W)$, where there is an edge between workers $i$ and $j$ if and only if they can directly communicate with each other. Here, $\W=[w_{ij}]$ is a symmetric doubly stochastic matrix whose $ij$-th entry $w_{ij}> 0$ denotes the weight on the edge $\{i,j\}$, and $w_{ij}=0$ if there is no edge between $i$ and $j$. We note that since $\W$ is the weighted adjacency matrix associated with the \emph{connected} network $\GM$, we have $\rho \in [0, 1)$, where $1-\rho$ is the spectral gap. The specific influence of the graph topology on the graph can be found in the literature~\cite{nedic2019graph}. For example, for path graphs one usually has $\rho = O(1/n^2)$, whereas for well connected graphs, like Erd\H{o}s-Reny\'i random graph, this influence reduces to $\rho = O(1/\log^2(n))$. 

The workers' objective is to minimize a global function $F:\R^d \rightarrow \R$ by performing local gradient steps and occasionally communicating over the network and leveraging the samples obtained by the other agents. Similarly, as in~\cite{spiridonoff2020local}, we assume that the function $F$ is a smooth function and that all workers have access to $F$ through noisy gradients. More precisely, we consider the following assumptions throughout the paper.

Assumption~\ref{as: F} specifies the class of functions we will be working with (i.e., strongly convex and smooth), and Assumption~~\ref{as: g} describe our noise model.

\begin{assumption}\label{as: F}
	Function $F:\R^d\rightarrow \R$ is differentiable, $\mu$-strongly convex and $L$-smooth with condition number $\kappa \triangleq {L}/{\mu}\ge 1$, that is, for every $x,y\in \R^d$ we have,
	\begin{align*}
		\frac{\mu}{2}\Vert \bx {-} \by \Vert^2 \leq F(\by) {-} F(\bx) {-} \langle \nabla F(\bx), \by {-} \bx \rangle \leq \frac{L}{2}\Vert \bx {-} \by \Vert^2.
	\end{align*}
\end{assumption}

\begin{assumption}\label{as: g}
	Each worker $i$ has access to a gradient oracle which returns an unbiased estimate of the true gradient in the form $\bg_i(\bx) = \nabla F(\bx) + \e_i$, such that $\e_i$ is a zero-mean conditionally independent random noise with its expected squared norm error bounded as
	\begin{align*}
		\E[\e_i] = \mathbf 0, \qquad \E[\Vert \e_i \Vert^2|\bx] \leq c\Vert \nabla F(\bx) \Vert^2 + \sigma^2,
	\end{align*}
	where $\sigma^2,c\geq0$ are constants.
\end{assumption}

Let us denote the length of time horizon by $T\in\mathbb{Z}_+$. In Decentralized Local SGD, each worker $i\in [n]$ holds a local parameter $\bx_i^t$ at iteration $t$ and a set $\I \subset [T]$ of communication times. By writing all the variables and the gradient values at time $t$ in a matrix form, we have 
\begin{align}\nonumber
&\X^t  \triangleq \left[ \bx_{1}^t, \cdots, \bx_{n}^t \right] \in \R^{d {\times} n},\cr 
&\G(\X^t)  \triangleq \left[ \bg_{1}^t, \cdots, \bg_{n}^t \right] \in \R^{d {\times} n},
\end{align}
where $\bg_i^t \triangleq \bg(\bx_i^t)$ is the stochastic gradient of node $i$ at iteration $t$. In each time step $t$, the Decentralized Local SGD algorithm performs the following update:
\[
\X^{t+1} = (\X^{t} - \eta_t\G(\X^t))\W_t,
\]
where  $\W_t \in \R^{n \times n}$ is the connected matrix defined by
\begin{equation}
	\label{eq:W_main}
	\W_t = \left\{ \begin{array}{ll}
		\bf{I}_n & \text{if} \ t  \notin \I,  \\
		\W & \text{if} \ t  \in \I.
	\end{array}\right. 
\end{equation}
Note that according to \eqref{eq:W_main}, the workers simply update their variables based on SGD at each $t\notin \I$, and communicate with the neighbors only at time instances $t\in \I$. The pseudo code for the Decentralized Local SGD is provided in Algorithm \ref{alg1}.
\begin{algorithm}[t]
	\caption{Decentralized Local SGD}
	\begin{algorithmic}[1]\label{alg1}
		\STATE Input: $\bx_i^0 = \bx^0$ for $i \in [n]$, total number of iterations $T$, the step-size sequence $\{\eta_t\}_{t=0}^{T-1}$ and $\I \subseteq [T]$.
		\FOR{$t=0,\ldots,T-1$}
		\FOR{$j=1,\ldots,n$}
		\STATE evaluate a stochastic gradient $\bg_j^t$
		\ENDFOR
		\FOR{$j=1,\ldots,n$}
		\IF{$t+1 \in \mathcal{I}$}
		\STATE $\bx_j^{t+1} = \sum_{i=1}^n w_{ij}(\bx_i^t - \eta_t \bg_i^t)$
		\ELSE
		\STATE $\bx_j^{t+1} = \bx_j^t - \eta_t \bg_j^t$
		\ENDIF
		\ENDFOR
		\ENDFOR
	\end{algorithmic}
\end{algorithm}

Our main objective will be to analyze the communication complexity and sample complexity of the outputs of Algorithm~\ref{alg1}. In the next section, we state our main results.

\section{Main Results: Convergence and Communication Complexity}\label{sec:results}

In this section, we analyze the convergence guarantee of the Decentralized Local SGD in terms of the number of communication rounds. We will show that for a specific choice of inter-communication intervals, one can achieve an approximate solution with arbitrarily small optimality gap with a convergence rate of $O(\frac{1}{nT})$ using only $|\I|=\Omega(n)$ communications rounds, which is independent of the horizon length $T$. To that end, let us consider the following notations that will be used throughout the paper.
\begin{align*}
    &\bbx^t\!\triangleq  \frac{1}{n}\X^t\1_n, \ \ \ \ \ \ \ \ \ \bar \e^t \triangleq \frac{1}{n}\sum_{i=1}^n\e_i^t, \cr &\bbg^t \triangleq \frac{1}{n}\G(\X^t)\1_n, \ \ \  \overline{\nabla F}(\X) \triangleq  \frac{1}{n}\nabla F(\X)\1_n,
\end{align*}
and define $\xi^t\triangleq \E[F(\bbx^t)] - F^*$ to be the optimality gap of the solution at time $t$. Let $0<\tau_1 < \ldots < \tau_R \le  T$ be the communication times, and denote the length of $(i+1)$-th inter-communication interval by $H_i \triangleq \tau_{i+1} - \tau_i$, for $i=0,\ldots,k-1$. Moreover, we let $\rho_t$ be the size of the second largest eigenvalue of the connectivity matrix $\W_t$, i.e.,
\begin{equation*}
	\rho_t = \left\{ \begin{array}{ll}
		1 & \text{if} \ t  \notin \I,  \\
		\rho & \text{if} \ t  \in \I.
	\end{array}\right. 
\end{equation*}
Note that although we have decided to define $\W_t$ as a time-varying matrix, this is done for notation convenience only. Such representation allows to model the scenario we are interested in by assuming that in all time instances where no communication over the network is made, the effective adjacency matrix of the communication network is the identity matrix.

We can now state the following theorem that bounds the optimality error in terms of the communication network's spectral gap at different times.

\begin{theorem}\label{thm1}
	Let Assumptions \ref{as: F} and \ref{as: g} hold, choose  $\beta \geq \max \{9\kappa^2 c \ln(1 + T/2\kappa^2) + 2\kappa(1+c/n), 2\kappa^2\}$, and set $\eta_k = 2/ (\mu (k+\beta))$. Then, the output of Algorithm~\ref{alg1} has the following property:
	\begin{align}\label{eq: opt E3}
		 & \E[F(\bbx^T)] - F^*  \leq \frac{\beta^2 (F(\bbx^0) - F^*)}{T^2}  +\nonumber \\
		& \qquad \qquad +  \frac{2L \sigma^2}{n \mu^2 T} + \frac{9L^2 \sigma^2}{ \mu^3 T^2} \sum_{t=0}^{T-1} \frac{1}{t+\beta}\sum_{k = 0}^{t-1}\prod_{i=k}^{t-1}\rho_i^2,
	\end{align}
	where $F^* \triangleq \min_{\bx} F(\bx)$ is the minimal value of the objective function.
\end{theorem}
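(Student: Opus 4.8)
The plan is to decouple the analysis into two tracks: (i) the dynamics of the network-averaged iterate $\bbx^t$, which produces the reported bound, and (ii) the consensus error measuring how far the individual iterates $\bx_i^t$ drift from $\bbx^t$, which is the cost of decentralization and is precisely where the spectral-gap products enter.

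First I would show the average iterate obeys an exact SGD recursion. Since every $\W_t$ is doubly stochastic, $\W_t\1_n=\1_n$, so right-multiplying $\X^{t+1}=(\X^t-\eta_t\G(\X^t))\W_t$ by $\tfrac{1}{n}\1_n$ gives $\bbx^{t+1}=\bbx^t-\eta_t\bbg^t$ regardless of whether $t\in\I$. Writing $\bbg^t=\overline{\nabla F}(\X^t)+\bar\e^t$ and applying the $L$-smoothness descent lemma to $F(\bbx^{t+1})$, I take conditional expectations and use Assumption~\ref{as: g}. The key gain is that conditional independence of the noise yields $\E[\|\bar\e^t\|^2\mid\X^t]=\tfrac{1}{n^2}\sum_i\E[\|\e_i^t\|^2\mid\bx_i^t]\le \tfrac{c}{n^2}\sum_i\|\nabla F(\bx_i^t)\|^2+\tfrac{\sigma^2}{n}$, whose $\sigma^2/n$ is the source of the linear-speedup term $\tfrac{2L\sigma^2}{n\mu^2 T}$.

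Next I would turn this into a scalar recursion for $\xi^t$. The obstacle is that $\overline{\nabla F}(\X^t)\ne\nabla F(\bbx^t)$; I add and subtract $\nabla F(\bbx^t)$, bound the mismatch by $\|\overline{\nabla F}(\X^t)-\nabla F(\bbx^t)\|^2\le \tfrac{L^2}{n}\|\X^t(\bI-\tfrac{1}{n}\1_n\1_n^\top)\|_F^2$ via $L$-smoothness and Jensen, and control the $\|\nabla F(\bx_i^t)\|^2$ terms arising from $\|\overline{\nabla F}(\X^t)\|^2$ and from the multiplicative noise using $\|\nabla F(\bx_i^t)\|^2\le 2\|\nabla F(\bbx^t)\|^2+2L^2\|\bx_i^t-\bbx^t\|^2$. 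After applying the Polyak--{\L}ojasiewicz inequality $\|\nabla F(\bbx^t)\|^2\ge 2\mu\,\xi^t$, the coefficient of $\|\nabla F(\bbx^t)\|^2$ is $-\tfrac{\eta_t}{2}\big(1-2L\eta_t(1+c/n)\big)$, which the bounds $\beta\ge 2\kappa(1+c/n)$ and $\beta\ge 2\kappa^2$ render nonpositive, producing the contraction $(1-\mu\eta_t)$. What remains is $\xi^{t+1}\le(1-\mu\eta_t)\xi^t+\tfrac{L\eta_t^2\sigma^2}{2n}+c_1\,\eta_t\tfrac{L^2}{n}\,\E\|Z^t\|_F^2$ with $Z^t\triangleq\X^t(\bI-\tfrac{1}{n}\1_n\1_n^\top)$.

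The heart of the argument is the consensus bound. Because $\W$, $\bI$, and $J\triangleq\tfrac{1}{n}\1_n\1_n^\top$ are simultaneously diagonalizable, $Z^{t+1}=(\X^t-\eta_t\G(\X^t))(\W_t-J)$ with $\|\W_t-J\|_{\mathrm{op}}=\rho_t$, so unrolling from $Z^0=0$ yields $Z^t=-\sum_{k=0}^{t-1}\eta_k\,\G(\X^k)\prod_{i=k}^{t-1}(\W_i-J)$, and the commuting product has operator norm at most $\prod_{i=k}^{t-1}\rho_i$; squaring is what produces the factors $\prod_{i=k}^{t-1}\rho_i^2$ in the statement. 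Splitting $\G$ into its gradient mean and its zero-mean noise, the noise part is a martingale difference sequence whose cross terms vanish, contributing $\sum_k\eta_k^2\,n\sigma^2\prod_{i=k}^{t-1}\rho_i^2$-type terms, while the gradient part re-expresses through past gaps $\xi^k$. Finally I would substitute into the scalar recursion and unroll with the integrating-factor method tailored to $\eta_k=2/(\mu(k+\beta))$: multiplying by $(t+\beta)^2$ makes $(1-\mu\eta_t)(t+\beta)^2=(t+\beta-2)(t+\beta)\le(t+\beta-1)^2$ telescope, so summing over $t$ turns the initial gap into $\beta^2\xi^0/T^2$, the noise into $2L\sigma^2/(n\mu^2 T)$, and the accumulated consensus error into the double-sum term. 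I expect the main obstacle to be exactly this last coupling: the consensus bound feeds past optimality gaps back into the descent, so closing the recursion requires the logarithmic slack $9\kappa^2 c\ln(1+T/2\kappa^2)$ in $\beta$ to dominate the accumulation $\sum_k\eta_k=\Theta(\ln T)$ generated by the multiplicative ($c>0$) noise and thereby decouple the two quantities.
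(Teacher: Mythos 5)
Your overall architecture (exact SGD recursion for $\bbx^t$ via double stochasticity, a consensus-error term, and the $(t+\beta)^2$ integrating-factor telescoping) matches the paper's, but two of your middle steps would fail to produce the stated bound \eqref{eq: opt E3}. The first gap is in your treatment of the inner product: you discard the term on which the whole proof hinges. The paper does not add/subtract $\nabla F(\bbx^t)$ and apply Young's inequality; it uses the polarization identity $\langle a,b\rangle=\frac12\Vert a\Vert^2+\frac12\Vert b\Vert^2-\frac12\Vert a-b\Vert^2$, which yields a \emph{negative} term $-\frac{\eta_t}{2}G_t$ with $G_t=\E[\frac1n\sum_i\Vert\nabla F(\bx_i^t)\Vert^2]$ \emph{in addition to} the PL contraction $-\mu\eta_t\xi^t$ (Lemma \ref{lem1}). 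This negative $G_t$ term is carried through to the end and is what absorbs both the $(1+c/n)G_t$ from the variance of $\bbg^t$ and the accumulated $cG_k$ terms fed forward through the consensus error; indeed, the hypothesis $\beta\geq 9\kappa^2c\ln(1+T/2\kappa^2)+2\kappa(1+c/n)$ enters the paper's proof precisely as the condition making the total coefficient of each $G_t$, after multiplying by $(t+\beta)^2$ and summing over $t$, nonpositive so these terms can be dropped. Your Young split leaves you instead with coefficient $-\frac{\eta_t}{2}(1-2L\eta_t(1+c/n))$ on $\Vert\nabla F(\bbx^t)\Vert^2$, and: (a) the stated conditions on $\beta$ give only $2L\eta_t(1+c/n)\le 4\kappa(1+c/n)/\beta\le 2$, not $\le 1$, so nonpositivity can fail (take $c=0$ and $\kappa$ close to $1$, where $\beta=2\kappa^2<4\kappa$); (b) even when nonpositive, PL yields the contraction $1-\mu\eta_t\bigl(1-2L\eta_t(1+c/n)\bigr)$, strictly weaker than $1-\mu\eta_t$, so the telescoping inequality $(1-\mu\eta_t)(t+\beta)^2<(t+\beta-1)^2$ no longer applies: an extra $8\kappa(1+c/n)\,\xi^t$ survives after multiplying by $(t+\beta)^2$, and absorbing it by a Gr\"onwall argument costs a multiplicative factor $e^{\Theta(\kappa(1+c/n)/\beta)}>1$ that is incompatible with the explicit constants in \eqref{eq: opt E3}; and (c) with the negative $G_t$ gone, the logarithmic slack in $\beta$ has nothing to act against, so the coupling you yourself flag as ``the main obstacle'' has no resolution inside your framework.

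The second gap is that your consensus bound is too weak. Unrolling $Z^t=-\sum_{k}\eta_k\G(\X^k)\prod_{i=k}^{t-1}(\W_i-\tfrac1n\1_n\1_n^\top)$ and using operator norms (with martingale orthogonality for the noise part) gives at best $V_t\lesssim\sum_{k=0}^{t-1}\eta_k^2(cG_k+\sigma^2)\prod_{i=k}^{t-1}\rho_i^2$, i.e.\ weights $\eta_k^2\asymp 1/(k+\beta)^2$ \emph{inside} the sum (and for the deterministic gradient part even this requires Cauchy--Schwarz, since its cross terms do not vanish). The paper's Lemma \ref{lem4} is strictly stronger: $V_t\le \frac{9(n-1)}{n}\sum_k\frac{cG_k+\sigma^2}{\mu^2(t+\beta)^2}\prod_{i=k}^{t-1}\rho_i^2$, with the \emph{current} index $t$ in the denominator. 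That improvement comes from Lemma \ref{lem2}: strong convexity makes the deterministic part of the consensus error contract by $(1-2\eta_t\mu+\eta_t^2L^2)$ at every step, and the product estimate of Lemma \ref{lem5} converts $\eta_k^2\prod_{i>k}(1-\tfrac{2}{i+\beta})$ into $O(1/(t+\beta)^2)$; your unrolling treats $\G(\X^k)$ as a black box and misses this contraction entirely. The difference is not cosmetic: plugging your version into the final telescoped sum produces $\sum_t(t+\beta)\sum_k\frac{1}{(k+\beta)^2}(\cdot)\prod_i\rho_i^2$ in place of $\sum_t\frac{1}{t+\beta}\sum_k(\cdot)\prod_i\rho_i^2$, and already within the first inter-communication interval (where all $\rho_i=1$) the former exceeds the latter by a factor of order $(t+\beta)^2/\beta^2$, so the $\frac{9L^2\sigma^2}{\mu^3T^2}$-weighted double sum in \eqref{eq: opt E3} cannot be recovered. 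Closing the proof requires both of the paper's devices: the polarization identity that keeps $-\frac{\eta_t}{2}G_t$, and the strong-convexity contraction inside the consensus recursion.
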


Theorem~\ref{thm1} states a general convergence result for an arbitrary sequence of communication graphs. Note that the expected optimality gap between the function value is evaluated at the average iterate and the minimal function value for any sequence of spectral gaps. We believe that this result is of independent interest, as it allows further flexibility in the analysis. Later in Section~\ref{sec:results}, we will specialize in this result to two main cases: having fixed communication intervals and having increased-sized communication intervals.

To prove Theorem~\ref{thm1}, we state two main technical lemmas, which will be proven in the supplementary material. Let us first define the following parameters:
\begin{align}\nonumber
G_t \triangleq \E[\frac{1}{n}\sum_{i=1}^n\|\nabla F(\bx^t_i)\|_2^2],\  V_t \triangleq\E[\frac{1}{n}\sum_{i=1}^n\|\bx^t_i-\bbx^t\|_2^2].
\end{align}
It is shown in the supplementary material that:

\begin{lemma}\label{lem1}
	Let Assumptions \ref{as: F} and \ref{as: g} hold. Then,
	\begin{align*}
		\xi^{t+1} &\leq \xi^t(1 - \mu \eta_t) - \frac{\eta_t}{2} G_t+ \frac{\eta_t^2 L}{2} (1+\frac{c}{n})G_t\cr 
		&\qquad+\frac{\eta_tL^2}{2}V_t +\frac{\eta_t^2L\sigma^2}{2n}.
	\end{align*}
\end{lemma}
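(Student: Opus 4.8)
The plan is to reduce the decentralized dynamics to a single ``virtual'' SGD trajectory followed by the network average, and then run a standard strongly convex descent argument while carefully tracking the consensus error. First I would exploit the double stochasticity of $\W_t$: since $\W_t\1_n = \1_n$ for both choices in \eqref{eq:W_main}, multiplying the update $\X^{t+1} = (\X^t - \eta_t\G(\X^t))\W_t$ on the right by $\tfrac1n\1_n$ collapses the mixing matrix and yields $\bbx^{t+1} = \bbx^t - \eta_t\bbg^t$. Thus the average iterate obeys an exact SGD recursion with the averaged stochastic gradient $\bbg^t = \overline{\nabla F}(\X^t) + \bar\e^t$, and the communication step is invisible to $\bbx^t$. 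This is the step that makes the whole analysis tractable.

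Next I would apply the $L$-smoothness upper bound of Assumption~\ref{as: F} to the pair $(\bbx^t,\bbx^{t+1})$, giving $F(\bbx^{t+1}) \le F(\bbx^t) - \eta_t\langle \nabla F(\bbx^t), \bbg^t\rangle + \tfrac{L\eta_t^2}{2}\|\bbg^t\|^2$. Taking the conditional expectation given $\X^t$ and using $\E[\bar\e^t \mid \X^t] = \mathbf 0$ replaces $\bbg^t$ by $\overline{\nabla F}(\X^t)$ in the cross term. For the quadratic term I would expand $\|\bbg^t\|^2 = \|\overline{\nabla F}(\X^t)\|^2 + 2\langle \overline{\nabla F}(\X^t),\bar\e^t\rangle + \|\bar\e^t\|^2$, kill the cross term in expectation, and bound the noise by the conditional-independence computation $\E[\|\bar\e^t\|^2\mid\X^t] = \tfrac{1}{n^2}\sum_i \E[\|\e_i^t\|^2\mid\X^t] \le \tfrac{c}{n}\cdot\tfrac1n\sum_i\|\nabla F(\bx_i^t)\|^2 + \tfrac{\sigma^2}{n}$, where the off-diagonal terms vanish. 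Combining this with Jensen's inequality $\|\overline{\nabla F}(\X^t)\|^2 \le \tfrac1n\sum_i\|\nabla F(\bx_i^t)\|^2$ and taking full expectation produces $\E\|\bbg^t\|^2 \le (1+\tfrac{c}{n})G_t + \tfrac{\sigma^2}{n}$, which accounts for the $\tfrac{\eta_t^2L}{2}(1+\tfrac cn)G_t$ and $\tfrac{\eta_t^2L\sigma^2}{2n}$ terms.

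The crux is the cross term $\E\langle \nabla F(\bbx^t), \overline{\nabla F}(\X^t)\rangle$, which must simultaneously generate the contraction $-\mu\eta_t\xi^t$, the descent term $-\tfrac{\eta_t}{2}G_t$, and the consensus penalty $+\tfrac{\eta_tL^2}{2}V_t$. The key is to split the inner product per node rather than on the average: writing $\langle \nabla F(\bbx^t), \overline{\nabla F}(\X^t)\rangle = \tfrac1n\sum_i \langle \nabla F(\bbx^t), \nabla F(\bx_i^t)\rangle$ and applying the polarization identity $\langle a,b\rangle = \tfrac12\|a\|^2 + \tfrac12\|b\|^2 - \tfrac12\|a-b\|^2$ to each summand gives $\tfrac12\|\nabla F(\bbx^t)\|^2 + \tfrac1{2n}\sum_i\|\nabla F(\bx_i^t)\|^2 - \tfrac1{2n}\sum_i\|\nabla F(\bbx^t)-\nabla F(\bx_i^t)\|^2$. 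I would then lower-bound the first piece by the Polyak--{\L}ojasiewicz inequality $\|\nabla F(\bbx^t)\|^2 \ge 2\mu(F(\bbx^t)-F^*)$ (a consequence of $\mu$-strong convexity) to obtain $\mu\xi^t$ in expectation; the second piece is exactly $\tfrac12 G_t$ after taking expectation; and the third is controlled by $L$-smoothness, $\|\nabla F(\bbx^t)-\nabla F(\bx_i^t)\|^2 \le L^2\|\bbx^t-\bx_i^t\|^2$, whose averaged expectation is $L^2 V_t$. I expect this per-node splitting to be the main obstacle: the tempting shortcut of bounding $\|\overline{\nabla F}(\X^t)\|^2$ by Jensen before extracting the descent destroys the $-\tfrac{\eta_t}{2}G_t$ term, since it is the per-node average $G_t$, not $\E\|\overline{\nabla F}(\X^t)\|^2$, that appears in the claim.

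Finally I would assemble the pieces: multiplying the cross-term lower bound by $-\eta_t$ gives $-\mu\eta_t\xi^t - \tfrac{\eta_t}{2}G_t + \tfrac{\eta_tL^2}{2}V_t$, and adding the quadratic-term bound from the second step, together with $\xi^t = \E[F(\bbx^t)]-F^*$, yields precisely the claimed recursion. Beyond the per-node decomposition no step is delicate; the remainder is the routine bookkeeping of collecting the $G_t$, $V_t$, and $\sigma^2/n$ contributions.
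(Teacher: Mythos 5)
Your proposal is correct and follows essentially the same route as the paper's proof: the collapse to the average-iterate recursion $\bbx^{t+1}=\bbx^t-\eta_t\bbg^t$, the $L$-smoothness descent inequality, the per-node polarization identity combined with the Polyak--{\L}ojasiewicz bound and $L$-smoothness for the cross term, and the conditional-independence plus Jensen argument for $\E[\Vert\bbg^t\Vert^2]$ all match the paper's steps exactly. Your observation that applying Jensen before the per-node split would destroy the $-\tfrac{\eta_t}{2}G_t$ term is a correct and worthwhile remark, but it does not change the argument.
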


\begin{lemma}\label{lem4}
	Let assumptions of Theorem \ref{thm1} hold. Then,
	\begin{align*}
		V_t \leq
		\frac{9(n-1)}{n}\sum_{k=0}^{t-1}\frac{cG_k+\sigma^2}{\mu^2(t+\beta)^2}\prod_{i=k}^{t-1}\rho_i^2.
	\end{align*}
\end{lemma}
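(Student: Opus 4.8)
The plan is to track the consensus error through the orthogonal projector $P \triangleq \bI_n - \tfrac1n\1_n\1_n^\top$ onto the complement of $\1_n$. Writing $\Y^t \triangleq \X^t P$, the columns of $\Y^t$ are exactly $\bx_i^t - \bbx^t$, so $U_t \triangleq \E\|\Y^t\|_F^2 = n V_t$, and $\Y^0 = 0$ because all workers start at $\bx^0$. Since each $\W_t$ is doubly stochastic it commutes with $P$ and fixes $\1_n$, which yields the clean identity $\Y^{t+1} = (\Y^t - \eta_t\G(\X^t)P)\W_t$; moreover the matrix in parentheses is annihilated by $\1_n$, so right multiplication by $\W_t$ contracts its Frobenius norm by a factor $\rho_t$. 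This reduces the matrix recursion to the scalar inequality $\|\Y^{t+1}\|_F^2 \le \rho_t^2\,\|\Y^t - \eta_t\G(\X^t)P\|_F^2$, which I would then expand and take expectations of.

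Next I would split $\G(\X^t) = \nabla F(\X^t) + \Xi^t$ into true-gradient and noise parts, where $\Xi^t$ collects the $\e_i^t$. Under Assumption~\ref{as: g} the noise is zero-mean and conditionally independent, so its cross term with the past-measurable factor vanishes in expectation, and using independence across workers a short computation gives $\E\|\Xi^t P\|_F^2 \le (n-1)(cG_t + \sigma^2)$ — this is where the $(n-1)/n$ prefactor is born. The engine of the $1/(t+\beta)^2$ decay is the gradient cross term: since $\bbx^t$ is the column mean, $\E\langle \Y^t, \nabla F(\X^t)P\rangle = \E\sum_i\langle \bx_i^t-\bbx^t,\, \nabla F(\bx_i^t)-\nabla F(\bbx^t)\rangle \ge \mu\,U_t$ by strong convexity, while $L$-smoothness gives $\E\|\nabla F(\X^t)P\|_F^2 \le L^2 U_t$. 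Collecting terms yields the contracting recursion $U_{t+1} \le \rho_t^2\big(q_t U_t + \eta_t^2(n-1)(cG_t+\sigma^2)\big)$ with $q_t = 1 - 2\mu\eta_t + \eta_t^2 L^2$.

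Finally I would telescope. With $\eta_t = 2/(\mu(t+\beta))$ one has $q_t = 1 - \tfrac{4}{t+\beta} + \tfrac{4\kappa^2}{(t+\beta)^2} \le \tfrac{(t+\beta-1)^2}{(t+\beta)^2}$, and this last step is exactly where the hypothesis $\beta \ge 2\kappa^2$ enters, since it forces $t+\beta \ge 2\kappa^2 \ge 2\kappa^2-\tfrac12$. Hence $\prod_{s=k+1}^{t-1}q_s \le (k+\beta)^2/(t+\beta-1)^2$, and unrolling from $U_0=0$ gives $U_t \le (n-1)\sum_{k=0}^{t-1}\eta_k^2(cG_k+\sigma^2)\big(\prod_{i=k}^{t-1}\rho_i^2\big)(k+\beta)^2/(t+\beta-1)^2$. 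The decisive cancellation is $\eta_k^2(k+\beta)^2 = 4/\mu^2$, independent of $k$, which pulls the $(t+\beta-1)^{-2}$ factor cleanly outside the sum; dividing by $n$ and loosening $4/(t+\beta-1)^2 \le 9/(t+\beta)^2$ (valid once $t+\beta\ge 3$) gives the stated bound. I expect the main obstacle to be precisely the recognition that the $1/(t+\beta)^2$ factor must be extracted from the strong-convexity contraction in $q_t$ rather than from the step size alone — a naive argument that discards the $-2\mu\eta_t U_t$ term leaves $\eta_k^2$ with a $(k+\beta)^{-2}$ that cannot be converted into the required $(t+\beta)^{-2}$ — together with the bookkeeping needed to verify the $q_t$ estimate under $\beta\ge 2\kappa^2$.
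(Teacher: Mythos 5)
Your proposal is correct and takes essentially the same route as the paper: you rederive the identical projector/contraction recursion $U_{t+1} \le \rho_t^2\bigl(q_t U_t + \eta_t^2 (n-1)(cG_t+\sigma^2)\bigr)$ (the paper's supplementary Lemma~\ref{lem2}, proved there via $Q = \tfrac1n \1_n\1_n^\top$ and the same spectral contraction argument), then unroll from $U_0 = 0$ with the same cancellation $\eta_k^2(k+\beta)^2 = 4/\mu^2$ and reach the same constant $9$. The only difference is cosmetic: you bound $q_s \le (s+\beta-1)^2/(s+\beta)^2$ so that the product telescopes exactly, whereas the paper bounds $q_s \le 1 - 2/(s+\beta)$ and invokes a separate logarithm/Riemann-sum estimate $\prod_{i=a}^{b}(1-2/i) \le (a/(b+1))^2$ (its Lemma~\ref{lem5}); both yield the stated bound.
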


We are now ready to prove our main result.

\begin{skeproof*}[Theorem~\ref{thm1}]
Using Lemmas \ref{lem1} and \ref{lem4}, one can bound the optimality error using $G_t$ and $V_t$ as 	
\begin{align}\nonumber
\xi^{t+1} &\leq \xi^t(1 - \mu \eta_t) - \frac{\eta_t}{2} G_t+ \frac{\eta_t^2 L}{2} (1+\frac{c}{n})G_t\cr 
		&\qquad+\frac{\eta_t^2L\sigma^2}{2n}+\frac{\eta_tL^2}{2}V_t, \cr 
	   V_t &\leq
		\frac{9(n-1)}{n}\sum_{k=0}^{t-1}\frac{cG_k+\sigma^2}{\mu^2(t+\beta)^2}\prod_{i=k}^{t-1}\rho_i^2.
	   	\end{align}

	If we combine the above relations and substitute \mbox{$\eta_t = {2}/{\mu(t+\beta)}$}, we get
	\begin{align*}
		\xi^{t+1} &\leq \xi^t(1 - \mu \eta_t)\! -\! \frac{1}{\mu(t+\beta)} G_t \!+\! \frac{2L}{\mu^2(t+\beta)^2} (1+\frac{c}{n}) G_t\cr  &+\frac{2L\sigma^2}{n\mu^2(t+\beta)^2}+ 
		\frac{9L^2}{\mu^3(t+\beta)^3} 
		\sum_{k=0}^{t-1} (cG_k+\sigma^2)\prod_{i=k}^{t-1}\rho_i^2.
	\end{align*}
	
	Moreover, if we multiply both sides of the above inequality by $(t+\beta)^2$ and use the following valid inequality
	\begin{align*}
	(1-\mu \eta_t)(t+\beta)^2 &= (1 - \frac{2}{t+\beta})(t+\beta)^2 \cr 
	&= (t+\beta)^2 - 2(t+\beta) \cr 
	&< (t+\beta-1)^2,
	\end{align*}
we obtain,
	\begin{align*}
		\xi^{t+1} (t\!+\!\beta)^2 \!\leq\! \xi^t (t\!+\!\beta\!-\!1)^2 \!+\! \left( \frac{2L}{\mu^2}(1+\frac{c}{n}) \!-\! \frac{t+\beta}{\mu} \right)G_t\cr
		+ \frac{2L \sigma^2}{n \mu^2} + 
		\frac{9L^2}{\mu^3(t+\beta)} \sum_{k=0}^{t-1} (c\E[G^k]+\sigma^2)\prod_{i=k}^{t-1} \rho_i^2.
	\end{align*}
	Summing this relation for $t=0,\ldots,T-1$, we get
	\begin{align*}
		&\xi^{T}(T + \beta -1 )^2 \leq \xi^{0}(\beta -1 )^2  +  \frac{2L \sigma^2}{n \mu^2}T\cr 
		&+ \frac{9L^2\sigma^2}{\mu^3}\sum_{t=0}^{T-1} \frac{1}{t+\beta}\sum_{k = 0}^{t-1}\prod_{i=k}^{t-1}\rho_i^2 \cr
		&+\! \sum_{t=0}^{T-1}G_t\! \left( \sum_{k=t+1}^{T-1}\!\frac{9L^2 c}{\mu^3(k+\beta)}\prod_{i=t}^{k-1}\rho_i^2\! +\! \frac{2L}{\mu^2}(1+\frac{c}{n}) \!-\! \frac{t+\beta}{\mu} \right).
	\end{align*}
	We can now bound the coefficient of $G_t$ in the above expression as follows:
	\begin{align*}
		&\sum_{k=t+1}^{T-1}\frac{9L^2 c}{\mu^3(k+\beta)}\prod_{i=t}^{k-1}\rho_i^2 + \frac{2L}{\mu^2}(1+\frac{c}{n}) - \frac{t+\beta}{\mu} \cr 
		&\leq 
		\sum_{k=t+1}^{T-1}\frac{9L^2 c}{\mu^3(k+\beta)} + \frac{2L}{\mu^2}(1+\frac{c}{n}) - \frac{t+\beta}{\mu} \\
		&\le \sum_{k=1}^{T-1}\frac{9L^2 c}{\mu^3(k+\beta)} + \frac{2L}{\mu^2}(1+\frac{c}{n}) - \frac{\beta}{\mu}\\
		&\le \frac{9L^2 c}{\mu^3}\ln(\frac{T-1+\beta}{\beta})+ \frac{2L}{\mu^2}(1+\frac{c}{n}) - \frac{\beta}{\mu}\\
		& = \frac{1}{\mu}\left( 9\kappa^2c \ln(1 + \frac{T- 1}{\beta}) + 2\kappa(1+\frac{c}{n}) - \beta\right)\leq 0,
	\end{align*}
	where in the third inequality we have used $\sum_{k=t_1+1}^{t_2} {1}/{k} \leq \int_{t_1}^{t_2} {dx}/{x} = \ln({t_2}/{t_1})$. The last inequality also holds by the assumption of the theorem. As the coefficient of $G_t$ is non-positive, we can simply drop it from the upper bound to get,
	\begin{align*}
		\xi^{T}(T + \beta -1 )^2 &\leq \xi^{0}(\beta -1 )^2  +  \frac{2L \sigma^2}{n \mu^2}T\cr 
		&
		\qquad+ \frac{9L^2\sigma^2}{\mu^3}\sum_{t=0}^{T-1} \frac{1}{t+\beta}\sum_{k = 0}^{t-1}\prod_{i=k}^{t-1}\rho_i^2.
	\end{align*}
	Finally, dividing both sides of the above inequality by $(T+\beta -1 )^2$ completes the proof. 
\end{skeproof*}

Next, we specialize Theorem \ref{thm1} to two specific choices of inter-communication time intervals.   
\subsection{Fixed-Length Intervals}
A simple way to select the communication times $\I$, is to partition the entire training time $T$ into $R$ subintervals of length at most $H$, i.e. $\tau_i = iH $ for $i=1,\ldots,R-1$ and $\tau_R=\min\{RH,T\}$. In that case, we can bound the error term in \eqref{eq: opt E3} as follows:
\begin{align*}
	\sum_{k = 0}^{t-1}\prod_{i=k}^{t-1}\rho_i^2 &\le \sum_{l=0}^{\lfloor\frac{t-1}{H}\rfloor}\sum_{k=lH}^{(l+1)H-1}\prod_{i=k}^{t-1}\rho_i^2\\
	&\le H\sum_{l=0}^{\lfloor\frac{t-1}{H}\rfloor}\prod_{i=(l+1)H-1}^{t-1}\rho_i^2\\
	&= H\sum_{l=0}^{\lfloor\frac{t-1}{H}\rfloor}\rho^{2(\lfloor\frac{t-1}{H}\rfloor-l)}\\
	&=H\sum_{l=0}^{\lfloor\frac{t-1}{H}\rfloor}\rho^{2l}\le \frac{H}{1-\rho^2},
\end{align*}
where in the first inequality and by some abuse of notation we set $\prod_{i=k}^{t-1}\rho_i^2=1$ if $k>t-1$. As a result, we can upper-bound the error term in \eqref{eq: opt E3} by 
\begin{align*}
	\sum_{t=0}^{T-1} \frac{1}{t+\beta}\sum_{k = 0}^{t-1}\prod_{i=k}^{t-1}\rho_i^2
	&\leq \frac{H}{1-\rho^2}\sum_{t=0}^{T-1} \frac{1}{t+\beta} \cr 
	&\leq \frac{H}{1-\rho^2} \ln(1+\frac{T}{ \beta - 1}).
\end{align*}
Therefore, we obtain the following corollary.
\begin{corollary}\label{cor:fixed-interval}
	Suppose that the assumptions of Theorem \ref{thm1} hold, and moreover, workers communicate at least once every $H$ iterations. Then,
	\begin{align}\label{eq: fixed-int}
	  \E[F(\bbx^T)] - F^* &\leq \frac{\beta^2(F(\bbx^0) - F^*)}{T^2}+\frac{2L \sigma^2}{n \mu^2 T} \cr
		& + \frac{9L^2 \sigma^2 H}{ \mu^3 T^2(1-\rho^2)} \ln(1 + \frac{T}{\beta - 1}).
	\end{align}
\end{corollary}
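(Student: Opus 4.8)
The plan is to treat Corollary~\ref{cor:fixed-interval} as a direct specialization of Theorem~\ref{thm1}. The only quantity in the bound~\eqref{eq: opt E3} that depends on the communication schedule is the double sum $\sum_{t=0}^{T-1} \frac{1}{t+\beta}\sum_{k=0}^{t-1}\prod_{i=k}^{t-1}\rho_i^2$, so the whole task reduces to upper-bounding this expression under the fixed-interval schedule $\tau_i = iH$ and substituting the result back into~\eqref{eq: opt E3}. No new probabilistic or convexity arguments are needed beyond those already packaged into Theorem~\ref{thm1}.

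First I would control the inner sum $\sum_{k=0}^{t-1}\prod_{i=k}^{t-1}\rho_i^2$ for a fixed $t$. Since $\rho_i = \rho < 1$ exactly at communication times (multiples of $H$) and $\rho_i = 1$ otherwise, the product $\prod_{i=k}^{t-1}\rho_i^2$ equals $\rho^{2m}$, where $m$ is the number of communication times in $\{k,\ldots,t-1\}$. The key monotonicity observation is that, viewed as a function of the lower index $k$, this product is non-decreasing in $k$, because increasing $k$ drops leading factors that are each at most one. Hence within every length-$H$ block $k \in \{lH,\ldots,(l+1)H-1\}$ the product is maximized at the right endpoint $k=(l+1)H-1$, and bounding all $H$ terms of the block by this maximum gives $\sum_{k=0}^{t-1}\prod_{i=k}^{t-1}\rho_i^2 \le H\sum_{l=0}^{\lfloor (t-1)/H\rfloor}\rho^{2(\lfloor (t-1)/H\rfloor - l)}$. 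Re-indexing turns the right-hand side into a truncated geometric series bounded by $H/(1-\rho^2)$, uniformly in $t$.

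Next I would handle the outer sum. Having bounded the inner sum by the $t$-independent constant $H/(1-\rho^2)$, I would factor it out and estimate the remaining harmonic-type sum by integral comparison, $\sum_{t=0}^{T-1}\frac{1}{t+\beta} \le \int_{\beta-1}^{T+\beta-1}\frac{dx}{x} = \ln\!\big(1+\tfrac{T}{\beta-1}\big)$. This yields $\sum_{t=0}^{T-1}\frac{1}{t+\beta}\sum_{k=0}^{t-1}\prod_{i=k}^{t-1}\rho_i^2 \le \frac{H}{1-\rho^2}\ln\!\big(1+\tfrac{T}{\beta-1}\big)$, and inserting this into the third term of~\eqref{eq: opt E3} produces exactly the stated bound~\eqref{eq: fixed-int}.

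The main obstacle is bookkeeping rather than anything conceptual: one must count the factors of $\rho^2$ in each block correctly, making sure the endpoints $\tau_i = iH$ (and the truncation $\tau_R = \min\{RH,T\}$) align with $\lfloor (t-1)/H\rfloor$ so that the exponent is precisely $\lfloor (t-1)/H\rfloor - l$, with no communication round double-counted or omitted. I would also fix the edge convention $\prod_{i=k}^{t-1}\rho_i^2 = 1$ when $k>t-1$, so that the $t=0$ term and the boundary blocks are treated consistently. Once the indexing is pinned down, both the geometric-series collapse and the integral bound on the harmonic sum are routine.
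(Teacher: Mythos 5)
Your proposal is correct and follows essentially the same argument as the paper: block the inner sum into length-$H$ intervals, use the fact that dropping leading factors (each at most one) bounds every term in a block by the value at the right endpoint, collapse the resulting geometric series to $H/(1-\rho^2)$, bound the harmonic sum by $\ln\bigl(1+\tfrac{T}{\beta-1}\bigr)$ via integral comparison, and substitute into Theorem~\ref{thm1}. The edge convention $\prod_{i=k}^{t-1}\rho_i^2=1$ for $k>t-1$ that you fix is exactly the convention the paper adopts.
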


Using Corollary \ref{cor:fixed-interval}, if we choose $H= \O(\frac{T}{n \ln(T)})$, then Algorithm 1 achieves a linear speedup in the number of workers, which is equivalent to a communication complexity of $R = \Omega(n \ln(T))$.

\subsection{Varying Intervals}
Here, we consider a more interesting choice of communication times with varying interval length. In other words, we allow the length of consecutive inter-communication intervals $H_i \triangleq \tau_{i+1} - \tau_i$ to grow linearly over time. The following Theorem presents a performance guarantee for this choice of communication times.

\begin{theorem}\label{thm2}
	Suppose assumptions of Theorem \ref{thm1} hold. Choose the maximum number of communications $1 \leq R \leq \sqrt{2T}$ and set $a\triangleq\lceil 2T/R^2 \rceil \geq 1$, $H_i = a(i+1)$ and $\tau_{i} = \min(a {i(i+1)}/{2}, T)$ for $i=1,\ldots,R$. Then, using Algorithm \ref{alg1}, we have
	\begin{align}\label{eq: opt general linear H}
		 \E[F(\bbx^T)] - F^* &\leq \frac{ \beta^2 (F(\bbx^0) - F^*)}{T^2}+ \frac{2L \sigma^2}{n \mu^2 T} \cr 
		&\qquad+ \frac{144L^2 \sigma^2}{(1-\rho^2)\mu^3 TR}.
	\end{align}
\end{theorem}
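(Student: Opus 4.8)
The plan is to reduce everything to Theorem~\ref{thm1} and then bound the single schedule‑dependent quantity
\[
E \triangleq \sum_{t=0}^{T-1}\frac{1}{t+\beta}\sum_{k=0}^{t-1}\prod_{i=k}^{t-1}\rho_i^2
\]
that appears in \eqref{eq: opt E3}. The first two terms of \eqref{eq: opt E3} already coincide with the first two terms of \eqref{eq: opt general linear H}, so it suffices to show $E \le \tfrac{16T}{(1-\rho^2)R}$; multiplying by $\tfrac{9L^2\sigma^2}{\mu^3 T^2}$ then produces the claimed $\tfrac{144L^2\sigma^2}{(1-\rho^2)\mu^3 TR}$, since $144=9\cdot 16$.

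First I would bound the inner sum for a fixed $t$. Because $\rho_i=\rho$ exactly at the communication times and $\rho_i=1$ otherwise, $\prod_{i=k}^{t-1}\rho_i^2=\rho^{2s}$, where $s$ is the number of communications in $\{k,\dots,t-1\}$. Grouping the index $k$ according to which inter‑communication interval it lies in, and using that interval $j$ has length $H_j=a(j+1)$, I would mimic the geometric‑series estimate of Corollary~\ref{cor:fixed-interval}: for $t$ lying in interval $m$,
\[
\sum_{k=0}^{t-1}\prod_{i=k}^{t-1}\rho_i^2 \le \sum_{j=0}^{m}a(j+1)\rho^{2(m-j)} \le a(m+1)\sum_{l=0}^{\infty}\rho^{2l}=\frac{H_m}{1-\rho^2},
\]
i.e.\ the inner sum is at most the length of the \emph{current} interval divided by $1-\rho^2$, the exact analog of the fixed‑length bound.

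Next I would split the outer sum over the intervals $[\tau_m,\tau_{m+1})$. For $m\ge 1$ I would exploit the lower bound $t\ge \tau_m = a m(m+1)/2$ to get $\sum_{t\in[\tau_m,\tau_{m+1})}\frac{1}{t+\beta}\le \frac{H_m}{\tau_m}=\frac{2}{m}$, so interval $m$ contributes at most $\frac{H_m}{1-\rho^2}\cdot\frac{2}{m}=\frac{2a(m+1)}{m(1-\rho^2)}=\mathcal{O}\!\big(\tfrac{a}{1-\rho^2}\big)$; summing over $m=1,\dots,R-1$ gives $\mathcal{O}\!\big(\tfrac{aR}{1-\rho^2}\big)$. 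The first interval needs separate treatment, which is the main subtlety: there $\tau_0=0$, so the $1/\tau_m$ argument is unavailable, but since no communication has yet occurred one has $\prod_{i=k}^{t-1}\rho_i^2=1$ for every $k$, whence the inner sum is simply $t$ and the interval contributes only $\sum_{t=0}^{a-1}\frac{t}{t+\beta}\le a$.

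Finally I would combine the pieces. The hypothesis $R\le\sqrt{2T}$ guarantees $a=\lceil 2T/R^2\rceil\le 4T/R^2$, so both $\tfrac{aR}{1-\rho^2}=\mathcal{O}\!\big(\tfrac{T}{R(1-\rho^2)}\big)$ and the first‑interval term $a=\mathcal{O}(T/R^2)$ are of the required order, and careful constant tracking collapses them into $E\le\tfrac{16T}{(1-\rho^2)R}$. The main obstacle is exactly this constant bookkeeping together with the off‑by‑one boundary effects (which indices carry $\rho$ rather than $1$, and the $\min(\cdot,T)$ truncation of the last interval): the telescoping that converts the \emph{growing} interval lengths $H_m$ into the $1/R$ saving is essentially tight, so one must avoid losing factors both when passing from $\sum_m H_m/\tau_m$ to the final $1/R$ and when replacing $a$ by $4T/R^2$, in order to land on the sharp constant $144$ rather than a larger value.
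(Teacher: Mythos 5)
Your proposal is correct and follows essentially the same route as the paper's proof: the same interval-wise decomposition with the geometric-series bound $\sum_{k=0}^{t-1}\prod_{i=k}^{t-1}\rho_i^2 \le H_j/(1-\rho^2)$ for $t$ in interval $j$, the same separate treatment of the first interval (where the product is identically $1$), the same use of $\tau_j = aj(j+1)/2$ to control $\sum_{t}1/(t+\beta)$ per interval, and the same final bookkeeping yielding $E \le 16T/((1-\rho^2)R)$ and hence the constant $144 = 9\cdot 16$.
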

\begin{proof}
	Define $\tau_0 = 0$, for any $t$ satisfying $\tau_j\le t<\tau_{j+1}$. We can write, 
	\begin{align*}
		\sum_{k = 0}^{t-1}\prod_{i=k}^{t-1}\rho_i^2 &\le \sum_{l=0}^{j}\sum_{k=\tau_l}^{\tau_{l+1}-1}\prod_{i=k}^{t-1}\rho_i^2\\
		&\le H_j\sum_{l=0}^{j}\prod_{i=\tau_{l+1}-1}^{t-1}\rho_i^2\\
		&= H_j\sum_{l=0}^{j}\rho^{2(j-l)}\\
		&\le \frac{H_j}{1-\rho^2}.
	\end{align*}
	Therefore,
	\begin{align*}
		\sum_{t=0}^{T-1} \frac{1}{t+\beta}\sum_{k = 0}^{t-1}\prod_{i=k}^{t-1}\rho_i^2 &\le \sum_{t=\tau_0}^{\tau_1-1}\frac{1}{t+\beta}\sum_{k = 0}^{t-1}\prod_{i=k}^{t-1}\rho_i^2\cr 
		&\qquad+ \sum_{j=1}^{R-1} \sum_{t=\tau_j}^{\tau_{j+1}-1} \frac{1}{t+\beta} ( \sum_{k = 0}^{t-1}\prod_{i=k}^{t-1}\rho_i^2) \\
		&\hspace{-0.7cm}\leq \sum_{t=\tau_0}^{\tau_1-1}\frac{t}{t+\beta}+\frac{1}{1-\rho^2} \sum_{j=1}^{R-1} \sum_{t=\tau_j}^{\tau_{j+1}-1} \frac{H_j}{t+\beta} \\
		&\hspace{-0.7cm}\le H_0+\frac{1}{1-\rho^2} \sum_{j=1}^{R-1}\frac{H_j^2}{\tau_j+\beta} \\
		&\hspace{-0.7cm}= a + \frac{1}{1-\rho^2}\sum_{j=1}^{R-1} \frac{2a^2(j+1)^2}{a j(j+1) + 2 \beta} \\
		&\hspace{-0.7cm}\leq a + \frac{2}{1-\rho^2}\sum_{j=1}^{R-1} \frac{a^2 (j+1)^2 }{aj(j+1)} \\
		&\hspace{-0.7cm}\leq \frac{4aR}{1-\rho^2}.
	\end{align*}
	If we substitute the values of $R$ and $a$ into the above relation, we get
	\begin{align*}
\sum_{t=0}^{T-1} \frac{t-\tau(t)}{t+\beta} &\leq \frac{4aR}{1-\rho^2} \leq \frac{4(\frac{2T}{R^2}+1)R}{1-\rho^2}\cr 
	&=\frac{1}{1-\rho^2} (\frac{8T}{R} + 4R)\cr 
	&\leq  \frac{16T}{(1-\rho^2)R},
	\end{align*} 
	where the last inequality holds because $R\leq \sqrt{2T}$. The above relation, together with Theorem \ref{thm1}, concludes the proof.
\end{proof}

According to Theorem \ref{thm2}, if we choose the number of communication rounds to be $R = \Omega(n)$, then Algorithm 1 achieves an error that scales as $O(\frac{1}{nT})$ in the number of workers when $T=\Omega(n^2)$. As a result, we obtain a linear speedup in the number of workers by simply increasing the number of iterations while keeping the total number of communications bounded.

\section{Numerical Experiments}\label{sec:numerics}

This section shows the results of several numerical experiments to compare the performance of different communication strategies in Decentralized Local SGD and show the impact of the number of workers and the communication network structure.

 \subsection{Quadratic Function With Strong-Growth Condition}
We use the same cost function $F$ as in \cite{spiridonoff2020local} for performance evaluation: define $F(\bx) = \E_\zeta f(\bx, \zeta)$ where,
\begin{align}\label{eq: f-zeta}
    f(\bx, \zeta) \triangleq \sum_{i=1}^d \frac{1}{2}x_i^2 (1 + z_{1,i}) + \bx^\top \bz_2.
\end{align}
Here, $\zeta = (\bz_1, \bz_2)$, where $\bz_1, \bz_2 \in \R^d, z_{1,i}\sim \mathcal{N}(0,c_1)$ and $z_{2,i} \sim \mathcal{N}(0,c_2)$, $\forall i \in[d]$ are random variables with normal distributions. We assume at each iteration $t$, each worker $i$ samples a $\zeta_i^t$ and uses $\nabla f(\bx, \zeta_i^t)$ as a stochastic estimate of $\nabla F(\bx)$.  It is easy to verify that $F(\bx) = \frac{1}{2}\bx^2$ is $1$-strongly convex and $1$-smooth, and $F^* = 0$. Moreover, the noise variance is uniform with strong-growth condition (Assumption \ref{as: g}): $\E_\zeta[\Vert \nabla f(\bx, \zeta) - \nabla F(\bx) \Vert^2] = c \Vert \nabla F(\bx) \Vert^2 + \sigma^2$, where $c = c_1$ and $\sigma^2 = dc_2$.

To compare different communication strategies in Decentralized Local SGD, we set the number of workers to be $n=20$ and generate connected random communication graphs using an Erd$\ddot{\text{o}}$s-R$\grave{\text{e}}$nyi graph with the probability of connectivity $p = 0.3$. We use the  local-degree weights (also known as Metropolis weights) to generate the mixing matrix, i.e., assigning the weight on an edge based on the larger degree of its two incident nodes \cite{xiao2004fast}:
$$w_{ij} = \frac{1}{\max\{d_i,d_j\}},\  \text{ if $\{i,j\}$ is an edge.}$$
We use Decentralized Local SGD (Algorithm~\ref{alg1}) to minimize $F(\bx)$ using different communication strategies. We select $c_1=15, c_2=\frac{1}{12}, d=20$, and $T=2000$ iterations, and the step-size sequence $\eta_t = 2/\mu(t+\beta)$ with $\beta=1$. We start each simulation from the initial point of $\bx^0 = \mathbf 1_d$ and repeat each simulation $500$ times. The average of the results are reported in Figures \ref{fig: res2}(a) and \ref{fig: res2}(b). 

\begin{figure}[t!]
	\centering
	\begin{subfigure}[b]{\linewidth}
		\includegraphics[width=\linewidth]{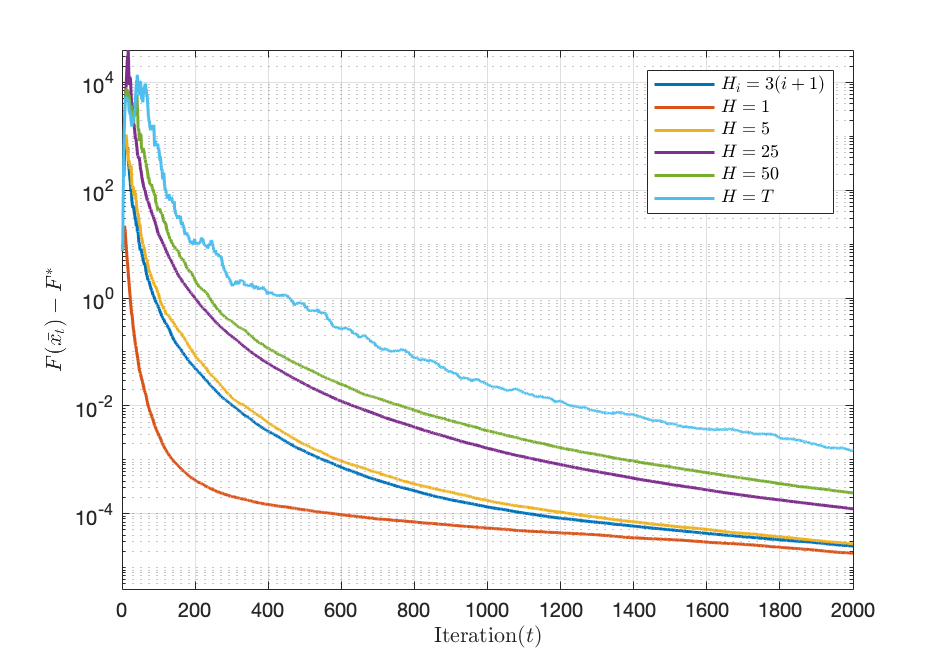}\vspace{-0.3cm}
        \caption{Error over iterations.}
	\end{subfigure}
	\begin{subfigure}[b]{\linewidth}
		\includegraphics[width=\linewidth]{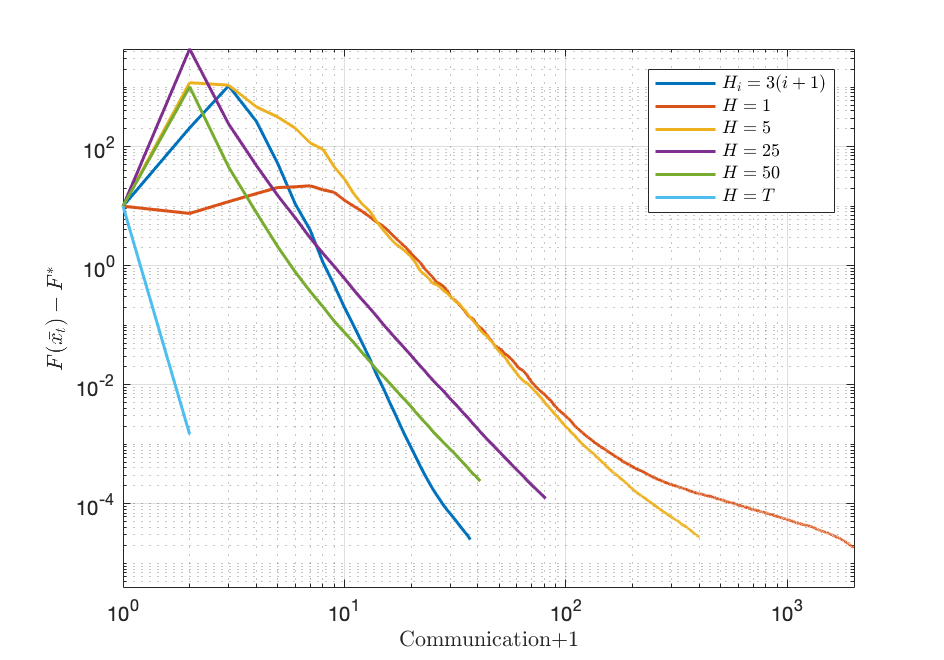}\vspace{-0.3cm}
        \caption{Error over communications}
	\end{subfigure}
	\caption{Local SGD with different communication strategies with $F(\bx)=\E_\zeta f(\bx, \zeta)$ defined in \eqref{eq: f-zeta}, $c_1=15, c_2=\frac{1}{12}, d=20, \beta=1$. Figures (a) and (b) show the error of different communication methods over iteration and communication round, respectively, with a fixed network size of $n=20$.}
	\label{fig: res2}
\end{figure}

Figure \ref{fig: res2}(a) shows that considering error over iterations, our proposed strategy, which gradually increases communication intervals ($H_i = 3(i+1)$) outperforms all the other strategies except the one that communicates at every iteration, which requires far more communication rounds than our proposed strategy.

Figure \ref{fig: res2}(b) illustrates the effectiveness of each communication round in different strategies. It shows that our proposed strategy uses communication rounds more efficiently than all the other strategies except that only communicates at the end of optimization, which is not competitive in terms of its final error. 

In particular, it is shown that the strategy with the same number of communications as our proposed strategy but fixed communication intervals ($H=50$) has both higher transient error and final error. This justifies the advantages of having more frequent communication at the beginning of the optimization and gradually increases communication intervals.

\begin{figure}[t!]
	\centering
	\begin{subfigure}[b]{\linewidth}
		\includegraphics[width=\linewidth]{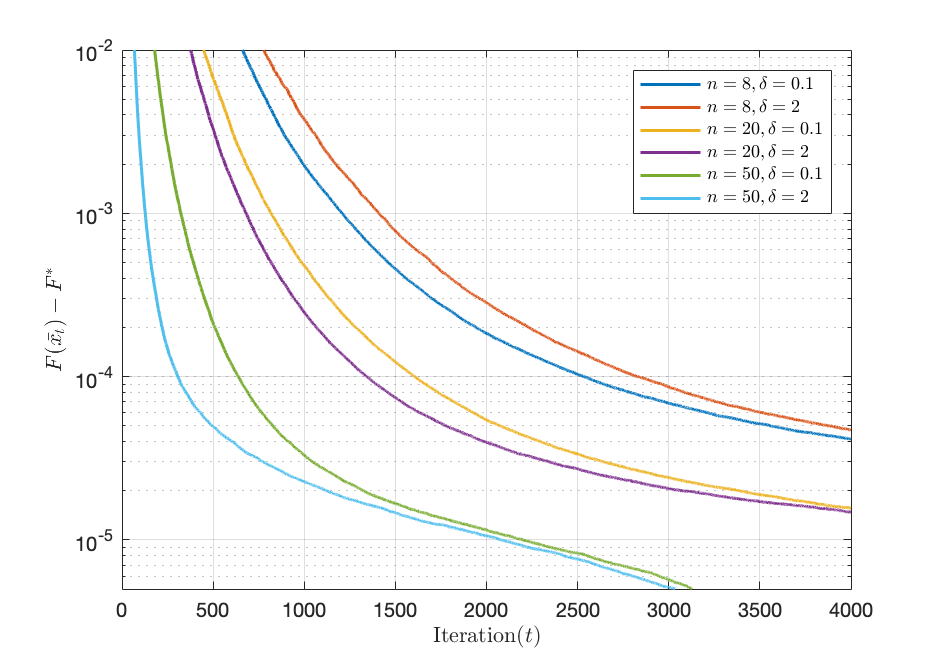}\vspace{-0.3cm}
        \caption{Erd$\ddot{\text{o}}$s-R$\grave{\text{e}}$nyi graphs}
	\end{subfigure}
	\begin{subfigure}[b]{\linewidth}
		\includegraphics[width=\linewidth]{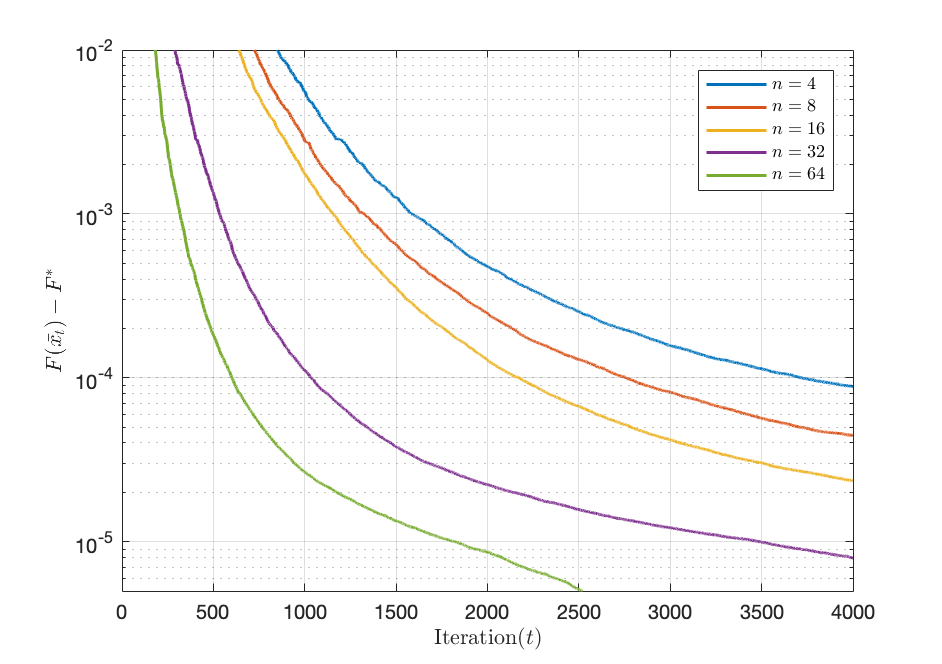}\vspace{-0.3cm}
        \caption{Path graphs}
	\end{subfigure}
	\caption{The convergence of Local SGD with the communication method proposed in this paper ($2n$ communication rounds) for various Erd$\ddot{\text{o}}$s-R$\grave{\text{e}}$nyi graphs with different number of workers and the probability of connectivity (a) and various path graphs with different number of workers (b).}
	\label{fig: res3}
\end{figure}

To evaluate the impact of the number of workers and the communication network structure in Decentralized Local SGD, we generate two sets of communication graphs: various Erd$\ddot{\text{o}}$s-R$\grave{\text{e}}$nyi graphs with different number of workers and the probability of connectivity $p = \bar{d}/n$, where $\bar{d}$ is the average degree of nodes and $\bar{d} = (1+\delta)\ln n$ with $\delta=0.1$, indicating a sparse Erd$\ddot{\text{o}}$s-R$\grave{\text{e}}$nyi graph, or $\delta=2$, indicating a dense Erd$\ddot{\text{o}}$s-R$\grave{\text{e}}$nyi graph; various path graphs with different number of workers. We use the  local-degree weights to generate the mixing matrix.

We use Decentralized Local SGD (Algorithm~\ref{alg1}) to minimize $F(\bx)$ and set the communication strategy to be varying intervals with the number of communication rounds $R=2n$. We select $c_1=15, c_2=\frac{1}{12}, d=20$, and $T=4000$ iterations, and the step-size sequence $\eta_t = 2/\mu(t+\beta)$ with $\beta=1$. We start each simulation from the initial point of $\bx^0 = \mathbf 1_d$ and repeat each simulation $500$ times. The average of the results are reported in Figures \ref{fig: res3}(a) and \ref{fig: res3}(b). 

Figure \ref{fig: res3}(a) shows that while the connectivity of the communication network makes a minor contribution to the differences of the convergence speed of the algorithm, with the better network connectivity generally resulting in faster convergence speed, the significant differences are caused by the different number of workers in the network. A linear-speed up in the number of workers can be seen in the figure with only $R = 2n$ communication rounds.

Figure \ref{fig: res3}(b) further verifies our theoretical findings. Notice that the connectivity of the communication network is reflected by the parameter ${1}/{(1-\rho^2)}$ in \eqref{eq: opt general linear H}, and ${1}/{(1-\rho^2)}$ for each path graph with $n=4,8,16,32,64$ workers can be computed as $2.84,10.1,39.3,156,623$, respectively. However, despite the increase of ${1}/{(1-\rho^2)}$, a linear-speed up of the convergence speed in the number of workers can still be observed. Figures \ref{fig: res3}(a)(b) verify that linear-speed up in the number of workers can be achieved with only $R = 2n$ communication rounds.

Additional experimental results for the regularized logistic regression can be found in the supplementary material.

\section{Conclusion}\label{sec:conclusions}

In this paper, we considered the problem of computation versus communication trade-off for Decentralized Local SGD over arbitrary undirected connected graphs. We have shown that by using appropriately chosen inter-communication intervals, one can achieve a linear speedup in the number of workers while keeping the total number of communications bounded by the number of workers.

In this work, we restricted our attention to undirected networks and homogeneous objective functions. Therefore, generalizing our work to directed networks in which workers have access to heterogeneous objective functions (or heterogeneous data sets) would be an interesting research direction. In that regard, the existing results such as \cite{li2019communication,khaled2019tighter,assran2019stochastic,pu2020push,zhang2019asynchronous} could serve as a good starting point.

 \bibliographystyle{authordate1}
 \bibliography{references,references1}
\newpage

	\onecolumn

	
	\section{Auxiliary results and proofs}
	
	
	Let us define the following notations used in the proofs here:
	\begin{align*}
	G_t := \E[\frac{1}{n}\sum_{i=1}^n\|\nabla F(\bx^t_i)\|_2^2],\qquad V_t :=\E[\frac{1}{n}\sum_{i=1}^n\|\bx^t_i-\bbx^t\|_2^2],
	\qquad \bar \e^t := \frac{1}{n}\sum_{i=1}^n\e_i^t
	\end{align*}
	Moreover, define $\F^t:=\{\bx_i^k, \bg_i^k | 1\leq i \leq n, 0\leq k \leq t-1 \} \cup \{\bx_i^t | 1 \leq i \leq n\}$ to be the history of all the iterates up to time $t$. Then, we can bound the optimality error in terms of $V_t$ and $G_t$ as follows:

	\begin{lemma}\label{lem1}
		Let Assumptions \ref{as: F} and \ref{as: g} hold. Then,
		\begin{align*}
		\xi^{t+1} &\leq \xi^t(1 - \mu \eta_t) - \frac{\eta_t}{2} G_t+ \frac{\eta_t^2 L}{2} (1+\frac{c}{n})G_t+\frac{\eta_tL^2}{2}V_t +\frac{\eta_t^2L\sigma^2}{2n}. 
		\end{align*}
	\end{lemma}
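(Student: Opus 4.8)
The plan is to first reduce the multi-agent dynamics to a single effective SGD recursion on the network average, and then run a standard descent argument. Since each $\W_t$ is doubly stochastic we have $\W_t\1_n=\1_n$, so right-multiplying the update $\X^{t+1}=(\X^t-\eta_t\G(\X^t))\W_t$ by $\frac1n\1_n$ collapses the mixing matrix and yields the clean recursion $\bbx^{t+1}=\bbx^t-\eta_t\bbg^t$. This structural observation is what lets the consensus disagreement enter the bound only through $V_t$, since the average iterate evolves as if all workers shared a single gradient.

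Next I would apply the $L$-smoothness upper bound from Assumption~\ref{as: F} at $\bbx^t$, giving
\[
F(\bbx^{t+1})\le F(\bbx^t)-\eta_t\langle\nabla F(\bbx^t),\bbg^t\rangle+\tfrac{L\eta_t^2}{2}\|\bbg^t\|^2,
\]
and then take the conditional expectation given the history $\F^t$. Writing $\bbg^t=\overline{\nabla F}(\X^t)+\bar\e^t$ with $\E[\bar\e^t\mid\F^t]=\mathbf 0$ (Assumption~\ref{as: g}), the cross term in the inner product drops and the linear term reduces to $\langle\nabla F(\bbx^t),\overline{\nabla F}(\X^t)\rangle$.

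The heart of the argument is decomposing this inner product. Writing $\overline{\nabla F}(\X^t)=\frac1n\sum_i\nabla F(\bx_i^t)$ and applying the polarization identity $\langle a,b\rangle=\tfrac12(\|a\|^2+\|b\|^2-\|a-b\|^2)$ termwise produces three pieces: a term $-\tfrac{\eta_t}{2}\|\nabla F(\bbx^t)\|^2$, which I convert to $-\mu\eta_t\xi^t$ via the Polyak--{\L}ojasiewicz inequality $\|\nabla F(\bbx^t)\|^2\ge 2\mu(F(\bbx^t)-F^*)$ that follows from $\mu$-strong convexity; the average-gradient piece $-\tfrac{\eta_t}{2n}\sum_i\|\nabla F(\bx_i^t)\|^2$, which is exactly $-\tfrac{\eta_t}{2}G_t$ in expectation; and a consensus piece $+\tfrac{\eta_t}{2n}\sum_i\|\nabla F(\bbx^t)-\nabla F(\bx_i^t)\|^2$, which by $L$-smoothness is at most $\tfrac{\eta_t L^2}{2}V_t$.

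Finally I would bound the second-moment term. Conditional independence and zero-mean of the noise give $\E[\|\bbg^t\|^2\mid\F^t]=\|\overline{\nabla F}(\X^t)\|^2+\E[\|\bar\e^t\|^2\mid\F^t]$; Jensen's inequality bounds $\E\|\overline{\nabla F}(\X^t)\|^2\le G_t$, while the noise bound of Assumption~\ref{as: g} together with conditional independence across workers gives $\E[\|\bar\e^t\|^2]\le\tfrac{c}{n}G_t+\tfrac{\sigma^2}{n}$. Substituting the inner-product and second-moment bounds into the descent inequality produces exactly the claimed recursion. I expect the only delicate step to be making the polarization decomposition line up simultaneously with all three target terms; in particular, one must invoke strong convexity in its PL form to generate the $\mu\eta_t\xi^t$ contraction factor, rather than a co-coercivity estimate that would not close the recursion in the desired shape.
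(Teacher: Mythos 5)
Your proposal is correct and follows essentially the same route as the paper's proof: reduce to the average-iterate recursion $\bbx^{t+1}=\bbx^t-\eta_t\bbg^t$ via double stochasticity, apply the $L$-smoothness descent inequality, decompose the inner product with the polarization identity into the PL term $-\mu\eta_t\xi^t$, the $-\tfrac{\eta_t}{2}G_t$ term, and the $\tfrac{\eta_t L^2}{2}V_t$ consensus term, and bound $\E[\|\bbg^t\|^2]$ by $(1+\tfrac{c}{n})G_t+\tfrac{\sigma^2}{n}$ using conditional independence of the noise. The only cosmetic difference is that the paper invokes unbiasedness $\E[\bg_i^t\mid\bx_i^t]=\nabla F(\bx_i^t)$ inside the sum before polarizing, whereas you split off $\bar\e^t$ first; these are the same computation.
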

	
	\begin{proof}
		We can write
		\begin{align*}
		\bbx^{t+1} = \frac{1}{n}\X^{t+1}\1_n 
		&= \frac{1}{n}((\X^{t} - \eta_t\G(\X^t))\W_t)\1_n \\
		&=  \frac{1}{n}\X^{t}\W_t\1_n - \eta_t\frac{1}{n}\G(\X^t))\W_t\1_n \\
		&= \bbx^{t}-\eta_t\bbg^t.
		\end{align*}

		By Assumption \ref{as: F}, we have
		\begin{align}\label{eq2}
		\E[F(\bbx^{t+1})-F(\bbx^t)] \leq -\eta_t \E[\langle \nabla F(\bbx^t), \bbg^t \rangle ]+ \frac{\eta_t^2 L}{2} \E[\Vert \bbg^t \Vert_2^2 ].
		\end{align}
		We bound the first term on the right side of \eqref{eq2} by conditioning on $\F^t$ as follows:
		\begin{align} \label{eq3}
		\E[\langle  \nabla F(\bbx^t), \bbg^t \rangle | \mathcal{F}^{t} ] &= \frac{1}{n} \sum_{i=1}^n \langle \nabla F(\bbx^t), \E[\bg_i^t | \bx_i^t] \rangle \nonumber \\
		&= \frac{1}{2} \Vert \nabla F(\bbx^t) \Vert^2 + \frac{1}{2n} \sum_{i=1}^n \Vert \nabla F(\bx_i^t) \Vert^2- \frac{1}{2n} \sum_{i=1}^n \Vert \nabla F(\bbx^t) - \nabla F(\bx_i^t) \Vert^2 \cr  
		& \geq \mu (F(\bbx^t) - F^*) + \frac{1}{2n} \sum_{i=1}^n \Vert \nabla F(\bx_i^t) \Vert^2 - \frac{L^2}{2n} \sum_{i=1}^n \Vert \bbx^t - \bx_i^t \Vert^2,
		\end{align}
		where we used $\langle a, b \rangle  = \frac{1}{2} \Vert a \Vert ^2 + \frac{1}{2} \Vert b\Vert^2 - \frac{1}{2} \Vert a-b \Vert ^2$ in the second equality, and $\frac{1}{2}\Vert \nabla F(\bx) \Vert^2 \geq \mu (F(\bx) - F^*)$ together with smoothness of $F$ in the last inequality.
		Taking expectation from \eqref{eq3}, we obtain
		\begin{align*}
		\E[\langle  \nabla F(\bbx^t), \bbg^t \rangle]\ge \mu\xi^t+\frac{1}{2}G_t-\frac{L^2}{2}V_t.
		\end{align*}
		Next, we bound the second term on the right side of \eqref{eq2} by conditioning on $\F^t$ as follows:
		\begin{align*}
		\E[\Vert \bbg^t \Vert^2|\F^t]\! &= \E[\Vert \overline{\nabla F}(\X^t) + \bar \e^t \Vert^2 | \F_t] \cr  
		&= \Vert \overline{\nabla F}(\X^t) \Vert^2 + \E[\Vert \bar \e^t \Vert^2| \F^t] \\&
		\!\leq\!  \frac{1}{n}\!\sum_{i=1}^n \!\Vert \nabla F(\bx_i^t) \Vert^2 \!+\! \frac{1}{n^2}\sum_{i=1}^n(\sigma^2 \!+\! c\Vert F(\bx_i^t) \Vert^2). \displaybreak
		\end{align*}
		Taking expectation from the above expression, we have
		\begin{align} \label{eq4}
		\E[\Vert \bbg^t \Vert^2]\le (1+\frac{c}{n})G_t+\frac{\sigma^2}{n}. 
		\end{align}
		Substituting \eqref{eq3}, \eqref{eq4} into \eqref{eq2} completes the proof.
	\end{proof}

	Next, we proceed to bound $V_t$. But before that, we first state and prove the following useful lemma.
	\begin{lemma}\label{lem3}
		Let $\rho$ be the second largest eigenvalue of the doubly stochastic matrix $\W$. Then, for any matrix $\Y\in \R^{d\times n}, \Y\1_n = 0$, we have $\|\Y \W\|_F^2\le\rho^2\|\Y\|_F^2$.
	\end{lemma}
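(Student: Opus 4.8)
The plan is to diagonalize $\W$ using the spectral theorem and then exploit that the constraint $\Y\1_n=\mathbf{0}$ annihilates the component of each row of $\Y$ along the leading eigenvector. Since $\W$ is symmetric, there is an orthonormal basis $v_1,\dots,v_n$ of $\R^n$ consisting of eigenvectors of $\W$, with $\W v_i=\lambda_i v_i$. Because $\W$ is doubly stochastic we have $\W\1_n=\1_n$, so we may take $v_1=\1_n/\sqrt{n}$ with $\lambda_1=1$; by the definition of $\rho=|\lambda_2|$, every remaining eigenvalue satisfies $|\lambda_i|\le\rho$ for $i\ge 2$.

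Next I would reduce the Frobenius-norm inequality to a statement about individual rows. Writing $\by_1^\top,\dots,\by_d^\top$ for the rows of $\Y$ (so each $\by_j\in\R^n$), symmetry of $\W$ gives $\by_j^\top\W=(\W\by_j)^\top$, hence $\|\Y\W\|_F^2=\sum_{j=1}^d\|\W\by_j\|^2$ and likewise $\|\Y\|_F^2=\sum_{j=1}^d\|\by_j\|^2$. It therefore suffices to prove $\|\W\by_j\|^2\le\rho^2\|\by_j\|^2$ for each fixed $j$.

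The key step is the expansion of a single row in the eigenbasis. The hypothesis $\Y\1_n=\mathbf{0}$ means $\by_j^\top\1_n=0$, i.e. $\by_j\perp v_1$, so that expanding $\by_j=\sum_{i=2}^n c_{ji}v_i$ carries no $v_1$-term. Applying $\W$ gives $\W\by_j=\sum_{i=2}^n c_{ji}\lambda_i v_i$, and orthonormality yields $\|\W\by_j\|^2=\sum_{i=2}^n c_{ji}^2\lambda_i^2\le\rho^2\sum_{i=2}^n c_{ji}^2=\rho^2\|\by_j\|^2$, where the inequality uses $\lambda_i^2\le\rho^2$ for every $i\ge 2$. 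Summing over $j=1,\dots,d$ then gives $\|\Y\W\|_F^2\le\rho^2\|\Y\|_F^2$, as claimed.

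I do not expect a serious obstacle here; the argument is a standard spectral estimate. The only points requiring care are verifying that the leading eigenvector is exactly $\1_n/\sqrt{n}$ (which is precisely where double stochasticity of $\W$ enters, guaranteeing that the eigenvalue-$1$ direction coincides with the subspace removed by $\Y\1_n=\mathbf{0}$) and correctly handling the transpose via symmetry when converting the row-wise bound into the Frobenius norm of $\Y\W$. Both are routine, so the bulk of the work is simply the bookkeeping of the eigenbasis expansion.
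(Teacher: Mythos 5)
Your proof is correct and follows essentially the same route as the paper: reduce the Frobenius norm to a row-by-row bound using symmetry of $\W$, then bound each $\|\W\by_j\|^2$ for rows orthogonal to $\1_n$ by $\rho^2\|\by_j\|^2$. The only difference is cosmetic: the paper invokes the variational characterization $\max_{\by^\top\1_n=0,\ \|\by\|=1}\|\W\by\|^2=\rho^2$ directly, whereas you prove that same fact explicitly via the orthonormal eigenbasis expansion.
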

	\begin{proof}
		Let $\by_1^\top,\cdots,\by_d^\top$ be the row vectors of $\Y$. By the assumption, we have $\by_i^\top\1_n=0,\forall i\in [d]$. Therefore,
		\begin{align*}
		\|\Y \W\|_F^2 &= \sum_{i=1}^d\|\by_i^\top W\|^2\\
		&\le \sum_{i=1}^d  \Big(\max_{\substack{\by^\top\1_n=0\\ \|\by\|^2=1}}\|W\by\|^2\Big)\cdot\|\by_i\|^2\\
		&= \sum_{i=1}^d \rho^2 \|\by_i\|^2 =\rho^2\|\Y\|_F^2.
		\end{align*}
	\end{proof}
	
	\begin{lemma}\label{lem2}
		Let Assumptions \ref{as: F} and \ref{as: g} hold. Then, 
		\begin{align}\nonumber
		V_{t+1}&\!\le\!\rho_t^2\Big(V_t\left(1\! -\! 2\eta_t \mu \!+\! \eta_t^2L^2 \right)\!+\!\frac{n\!-\!1}{n}\eta_t^2\sigma^2\!+\!\frac{n\!-\!1}{n}\eta_t^2 cG_t\Big).
		\end{align}
	\end{lemma}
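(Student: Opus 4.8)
The plan is to work entirely in the matrix form of the iterates and to track $V_t$ through the centering projection $P \triangleq \bI_n - \frac1n\1_n\1_n^\top$. Since $\X^t - \bbx^t\1_n^\top = \X^t P$, we may write $V_t = \frac1n\E[\|\X^t P\|_F^2]$ and likewise $V_{t+1} = \frac1n\E[\|\X^{t+1}P\|_F^2]$. Setting $\Y^t \triangleq \X^t - \eta_t\G(\X^t)$ so that $\X^{t+1} = \Y^t\W_t$, the first step is to strip off the mixing matrix. Using that $\W_t$ is symmetric and doubly stochastic (so $\W_t\1_n = \1_n$ and $\1_n^\top\W_t = \1_n^\top$), a short manipulation gives $\X^{t+1}P = (\Y^t P)\W_t$, where the column-centered matrix $\Y^t P$ satisfies $(\Y^t P)\1_n = 0$. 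Lemma~\ref{lem3} then yields $\|\X^{t+1}P\|_F^2 \le \rho_t^2\|\Y^t P\|_F^2$; note this also covers the non-communication case $\W_t = \bI_n$, $\rho_t = 1$ trivially.

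Next I would condition on the history $\F^t$ and split $\G(\X^t) = \nabla F(\X^t) + \mathbf E^t$ into its deterministic and noise parts, where $\mathbf E^t$ collects the noises $\e_i^t$. Because the noise is zero-mean given $\F^t$, the cross term vanishes and $\E[\|\Y^t P\|_F^2 \mid \F^t] = \|(\X^t - \eta_t\nabla F(\X^t))P\|_F^2 + \eta_t^2\,\E[\|\mathbf E^t P\|_F^2 \mid \F^t]$. For the noise term, centering gives $\|\mathbf E^t P\|_F^2 = \sum_i\|\e_i^t\|^2 - n\|\bar\e^t\|^2$, and conditional independence of the $\e_i^t$ collapses $\E[\|\bar\e^t\|^2\mid\F^t]$ to $\frac{1}{n^2}\sum_i\E[\|\e_i^t\|^2\mid\F^t]$, leaving a factor $\frac{n-1}{n}\sum_i\E[\|\e_i^t\|^2\mid\F^t]$. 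Assumption~\ref{as: g} then bounds this by $\frac{n-1}{n}(c\sum_i\|\nabla F(\bx_i^t)\|^2 + n\sigma^2)$, which after dividing by $n$ and taking expectations is exactly $\frac{n-1}{n}(cG_t + \sigma^2)$.

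The deterministic part is where the contraction factor $1 - 2\eta_t\mu + \eta_t^2 L^2$ arises. Writing the centered deviations $\mathbf d_i = \bx_i^t - \bbx^t$ and centered gradients $\mathbf h_i = \nabla F(\bx_i^t) - \overline{\nabla F}(\X^t)$, one has $\|(\X^t - \eta_t\nabla F(\X^t))P\|_F^2 = \sum_i\|\mathbf d_i\|^2 - 2\eta_t\sum_i\langle\mathbf d_i,\mathbf h_i\rangle + \eta_t^2\sum_i\|\mathbf h_i\|^2$. I would then treat the two remaining sums separately: since $\sum_i\mathbf d_i = 0$, the average gradient $\overline{\nabla F}(\X^t)$ can be freely replaced by $\nabla F(\bbx^t)$ in the inner-product sum, after which strong convexity gives $\sum_i\langle\mathbf d_i,\mathbf h_i\rangle \ge \mu\sum_i\|\mathbf d_i\|^2$; for the last sum, the elementary fact that the mean minimizes the sum of squared distances lets me replace $\overline{\nabla F}(\X^t)$ by $\nabla F(\bbx^t)$ and then invoke $L$-smoothness to get $\sum_i\|\mathbf h_i\|^2 \le L^2\sum_i\|\mathbf d_i\|^2$. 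Collecting these gives $\|(\X^t - \eta_t\nabla F(\X^t))P\|_F^2 \le (1 - 2\eta_t\mu + \eta_t^2 L^2)\|\X^t P\|_F^2$.

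Combining the three pieces — the $\rho_t^2$ contraction, the deterministic contraction factor, and the noise bound — and dividing by $n$ gives the claim. The main obstacle, and the only genuinely delicate point, is the deterministic contraction: the natural centering reference is the gradient average $\overline{\nabla F}(\X^t)$ rather than $\nabla F(\bbx^t)$, and one must exploit both the zero-sum identity $\sum_i\mathbf d_i = 0$ (for the strong-convexity term) and the variance-minimizing property of the mean (for the smoothness term) in order to swap in $\nabla F(\bbx^t)$ without loss. The commutation $\X^{t+1}P = (\Y^t P)\W_t$ that lets Lemma~\ref{lem3} apply cleanly is the other place where the symmetric, doubly-stochastic structure of $\W_t$ is essential.
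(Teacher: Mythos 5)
Your proposal is correct, and its skeleton coincides with the paper's proof: the same commutation of $\W_t$ with the centering projection followed by Lemma~\ref{lem3} to extract the $\rho_t^2$ factor, the same bias--variance split conditional on $\F^t$, and an identical treatment of the noise term (variance identity plus conditional independence, yielding $\tfrac{n-1}{n}\eta_t^2(cG_t+\sigma^2)$). The one place where you genuinely diverge is the deterministic contraction. The paper obtains both the smoothness bound $\sum_i\Vert\nabla F(\bx_i^t)-\overline{\nabla F}(\X^t)\Vert^2 \le L^2\sum_i\Vert\bx_i^t-\bbx^t\Vert^2$ and the strong-convexity bound $\sum_i\langle \nabla F(\bx_i^t), \bx_i^t-\bbx^t\rangle \ge \mu\sum_i\Vert\bx_i^t-\bbx^t\Vert^2$ via pairwise-difference identities of the form $\sum_i\Vert a_i-\bar a\Vert^2 = \tfrac1n\sum_{\{i,j\}}\Vert a_i-a_j\Vert^2$, applying $L$-smoothness and strong monotonicity of $\nabla F$ to each pair $(\bx_i^t,\bx_j^t)$. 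You instead recenter at $\nabla F(\bbx^t)$: the zero-sum property $\sum_i \mathbf d_i = 0$ handles the inner-product term, and the variance-minimizing property of the mean handles the squared-norm term, after which smoothness and strong convexity are applied between each $\bx_i^t$ and the single point $\bbx^t$. Both routes give exactly the factor $1-2\eta_t\mu+\eta_t^2L^2$; the paper's pairwise identities are exact (no inequality is consumed in the recentering), while your argument is arguably more elementary in that it needs only two standard one-line facts and avoids pairwise bookkeeping, at the cost of one extra (harmless) inequality in the squared-norm step.
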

	\begin{proof}
		Let us $Q = \frac{1}{n}\1_n\1_n^\top$. Then, we have
		\begin{align}\label{eq5}
		nV_{t+1} &= \E[\|\X^{t+1}(\bI -Q)\|_F^2]\nonumber\\
		&=\E[\|(\X^{t} - \eta_t\G(\X^t))\W_t(\bI-Q)\|_F^2]\nonumber\\
		&=\E[\|(\X^{t} - \eta_t\G(\X^t))(\bI-Q)\W_t\|_F^2]\nonumber\\
		&\le \rho_t^2\E[\|(\X^t-\eta_t\G(\X^t))(\bI-Q)\|_F^2],
		\end{align}
		where the last inequality holds by Lemma \ref{lem3}. Define $$\bx_i^{t+1/2}:=\bx_i^t-\eta_t\bg_i^t,\qquad \bbx^{t+1/2}:=\frac{1}{n}\sum_{i=1}^n\bx_i^{t+1/2}.$$
		Then, we can write
		\begin{align}\label{eq: consensus var}
		\E[\|(\X^t-\eta_t\G(\X^t))(\bI-Q)\|_F^2]&=\E [  \sum_{i=1}^{n} \Vert \bx_i^{t+1/2} - \bbx^{t+1/2} \Vert^2 ]\nonumber\\
		&= \sum_{i=1}^{n} \Vert \E[\bx_i^{t+1/2} - \bbx^{t+1/2}] \Vert^2+ \sum_{i=1}^{n} \E \left[\Vert \bx_i^{t+1/2} - \bbx^{t+1/2} - \E[\bx_i^{t+1/2} - \bbx^{t+1/2}] \Vert^2 \right].\cr 
		\end{align}
		Let us consider the first term on the right side of \eqref{eq: consensus var}. By taking conditional expectation, we obtain
		\begin{align} \label{eq: consensus 1}
		\sum_{i=1 }^{n} \Vert \E[\bx_i^{t+1/2} - \bbx^{t+1/2} | \ \F^{t}] \Vert^2 &= \sum_{i =1}^{n} \Vert \bx_i^t - \bbx^t - \eta_t(\nabla F(\bx_i^t) - \overline{\nabla F}(\X^t)) \Vert^2 
		\nonumber \\
		&=\sum_{i=1}^{n} \Vert \bx_i^t - \bbx^t \Vert^2 +\sum_{i=1}^{n} \eta_t^2\Vert\nabla F(\bx_i^t) - \overline{\nabla F}(\X^t) \Vert^2- 2 \eta_t \sum_{i=1}^{n}\langle \nabla F(\bx_i^t), \bx_i^t - \bbx^t \rangle.
		\end{align}
		By using $L$-smoothness of $F$, we have
		\begin{align} \label{eq: consensus L}
		\sum_{i=1}^n \Vert\nabla F(\bx_i^t) - \overline{\nabla F}(\X^t) \Vert^2 &= \frac{1}{n} \sum_{\{i,j\}} \Vert F(\bx_i^t) - F(\bx_j^t) \Vert^2\cr 
		&\leq \frac{L^2}{n} \sum_{\{i,j\}} \Vert \bx_i^t - \bx_j^t \Vert^2 = L^2 \sum_{i=1}^n \Vert \bx_i^t - \bbx^t \Vert^2.
		\end{align}
		Moreover, by $\mu$-strong convexity of $F$, we have
		\begin{align} \label{eq: consensus mu}
		\sum_{i=1}^n \langle F(\bx_i^t), \bx_i^t - \bbx^t \rangle &= \sum_{i=1}^n  \langle F(\bx_i^t), \frac{1}{n} \sum_{j=1}^n (\bx_i^t - \bx_j^t) \rangle \cr 
		&= \frac{1}{n} \sum_{\{i,j\}} \langle F(\bx_i^t) - F(\bx_j^t), \bx_i^t - \bx_j^t \rangle \cr 
		&\geq \frac{\mu}{n} \sum_{\{i,j\}} \Vert \bx_i - \bx_j \Vert^2 = \mu \sum_{i=1}^{n} \Vert \bx_i^t - \bbx^t \Vert^2,
		\end{align}
		where the last inequality follows from the relation $\langle \nabla F(\bx) - \nabla F(\by), \bx - \by \rangle \geq \mu \Vert \bx - \by \Vert^2$.
		Finally, by combining \eqref{eq: consensus 1}-\eqref{eq: consensus mu} we obtain,
		\begin{align*}
		\sum_{i=1 }^{n} \Vert \E[\bx_i^{t+1/2} - \bbx^{t+1/2} | \F^t] \Vert^2\leq \sum_{i=1}^{n} \Vert \bx_i^t - \bbx^t \Vert^2 \left( 1 - 2\eta_t \mu + \eta_t^2L^2 \right). 
		\end{align*}
		Next, let us consider the second term on the right side of \eqref{eq: consensus var}. We have,
		\begin{align*}
		&\sum_{i=1}^{n} \E \left[\left\Vert \bx_i^{t+1/2} - \bbx^{t+1/2} - \E[\bx_i^{t+1/2} - \bbx^{t+1/2}] \right\Vert^2 | \F^t \right] \nonumber\\
		&\qquad= 
		\sum_{i=1}^{n} \E \left[\left\Vert \bx_i^{t+1/2}- \E[\bx_i^{t+1/2}] - (\bbx^{t+1/2} - \E[\bbx^{t+1/2}]) \right\Vert^2 | \F^t \right] \\
		&\qquad= \eta_t^2 \sum_{i=1}^{n} \E \left[ \left\Vert \e_i^t - \bar \e^t \right\Vert^2 | \F^t \right] \\
		&\qquad=\eta_t^2 \left(\sum_{i=1}^{n} \E \left[ \left\Vert \e_i^t \right\Vert^2 | \F^t \right]- n \E \left[ \left\Vert \bar \e^t \right\Vert^2 |\F^t \right]\right) \\
		&\qquad= \eta_t^2 \sum_{i=1}^{n} \E \left[ \left\Vert \e_i^t \right\Vert^2 | \F^t \right](1-\frac{1}{n}) \\
		&\qquad\leq (n-1)\eta_t^2\sigma^2 + (1-\frac{1}{n})\eta_t^2 c\sum_{i=1}^n \Vert \nabla F(\bx_i^t) \Vert^2,
		\end{align*}
		where in the last equality we have used conditional independence of $\e_i^t$ to conclude $\E[\Vert \bar \e^t \Vert^2 | \F^t] = (1/n^2)\sum_{i=1}^n \E[\Vert \e_i^t \Vert^2 | \F^t]$. If we take expectation from the two relations above and combine them with \eqref{eq5} and \eqref{eq: consensus var}, we get
		\begin{align*}
		nV_{t+1} &\le \rho_t^2\E[\|(\X^t-\eta_t\G(\X^t))(\bI-Q)\|_F^2]\\
		&= \rho_t^2\sum_{i=1}^{n} \Vert \E[\bx_i^{t+1/2} - \bbx^{t+1/2}] \Vert^2 + \rho_t^2\sum_{i=1}^{n} \E \left[\Vert \bx_i^{t+1/2} - \bbx^{t+1/2} - \E[\bx_i^{t+1/2} - \bbx^{t+1/2}] \Vert^2 \right]\\
		&\le\rho_t^2\E[\sum_{i=1}^{n} \Vert \bx_i^t - \bbx^t \Vert^2 ]\left( 1 - 2\eta_t \mu + \eta_t^2L^2 \right)+\rho_t^2((n-1)\eta_t^2\sigma^2 + (1-\frac{1}{n})\eta_t^2 c\E[\sum_{i=1}^n \Vert \nabla F(\bx_i^t) \Vert^2])\\
		&=\rho_t^2(nV_t\left( 1 \!-\! 2\eta_t \mu\!+\! \eta_t^2L^2 \right)\!+\!(n\!-\!1)\eta_t^2\sigma^2\!+\!(n\!-\!1)\eta_t^2 cG_t).
		\end{align*}
		That completes the proof.
	\end{proof}

	
	\begin{lemma}\label{lem4}
		Let assumptions of Theorem \ref{thm1} hold. Then,
		\begin{align*}
		V_t \leq
		\frac{9(n-1)}{n}\sum_{k=0}^{t-1}\frac{cG_k+\sigma^2}{\mu^2(t+\beta)^2}\prod_{i=k}^{t-1}\rho_i^2.
		\end{align*}
	\end{lemma}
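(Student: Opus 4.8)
The plan is to unroll the one-step consensus recursion from Lemma~\ref{lem2} all the way back to the common initialization, where $V_0 = 0$ because every worker starts at the same point $\bx^0$. Writing that recursion as $V_{t+1} \le \rho_t^2\big(A_t V_t + b_t\big)$ with $A_t \triangleq 1 - 2\eta_t\mu + \eta_t^2 L^2$ and $b_t \triangleq \tfrac{n-1}{n}\eta_t^2(\sigma^2 + cG_t)$, a straightforward induction gives $V_t \le \sum_{k=0}^{t-1} b_k\big(\prod_{i=k}^{t-1}\rho_i^2\big)\big(\prod_{i=k+1}^{t-1}A_i\big)$, where the factor attached to $b_k$ collects one $\rho_k^2$ from its own step and a $\rho_i^2 A_i$ from each subsequent step. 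The claimed bound then follows once I control $\prod_{i=k+1}^{t-1}A_i$ and combine it with the $\eta_k^2$ hidden inside $b_k$.

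\textbf{Controlling the contraction factor.} First I would simplify $A_t$. Since $\eta_t = 2/(\mu(t+\beta))$ and $\beta \ge 2\kappa^2$, we have $t+\beta \ge 2\kappa^2$, so $\eta_t L^2 = \tfrac{2L^2}{\mu(t+\beta)} \le \mu$; hence $\eta_t^2 L^2 \le \mu\eta_t$ and therefore $A_t \le 1 - \mu\eta_t = 1 - \tfrac{2}{t+\beta}$. The key trick is to dominate this by a perfect square, $1 - \tfrac{2}{t+\beta} \le \big(1 - \tfrac{1}{t+\beta}\big)^2 = \big(\tfrac{t+\beta-1}{t+\beta}\big)^2$, which makes the product telescope cleanly into $\prod_{i=k+1}^{t-1} A_i \le \big(\tfrac{k+\beta}{t+\beta-1}\big)^2$.

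\textbf{Assembling the bound.} Substituting $\eta_k^2 = 4/(\mu^2(k+\beta)^2)$ into $b_k$ and inserting the telescoped product, the $(k+\beta)^2$ cancels: each summand is at most $\tfrac{n-1}{n}\cdot\tfrac{4(\sigma^2+cG_k)}{\mu^2(k+\beta)^2}\cdot\tfrac{(k+\beta)^2}{(t+\beta-1)^2}\prod_{i=k}^{t-1}\rho_i^2 = \tfrac{4(n-1)}{n}\cdot\tfrac{\sigma^2+cG_k}{\mu^2(t+\beta-1)^2}\prod_{i=k}^{t-1}\rho_i^2$. For $t\ge 1$ we have $t+\beta \ge 3$ (since $\beta \ge 2\kappa^2 \ge 2$), so $2(t+\beta)\le 3(t+\beta-1)$ and hence $\tfrac{4}{(t+\beta-1)^2} \le \tfrac{9}{(t+\beta)^2}$; summing over $k$ then yields exactly the claimed inequality, the case $t=0$ being trivial since $V_0=0$.

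\textbf{Main obstacle.} I expect the delicate step to be majorizing $A_t$ by the perfect square $\big(1-\tfrac{1}{t+\beta}\big)^2$: this is precisely what lets the product telescope to $(k+\beta)^2/(t+\beta-1)^2$ and cancel the $(k+\beta)^2$ produced by $\eta_k^2$, leaving the clean $1/(t+\beta)^2$ decay. The mismatch between $(t+\beta-1)^2$ and $(t+\beta)^2$ is exactly the slack that the constant $9$ (rather than $4$) absorbs, and verifying it requires only $\beta\ge 2$, which the hypotheses guarantee; the reduction $A_t \le 1-\mu\eta_t$ via $\beta \ge 2\kappa^2$ is the other place where the parameter assumptions are used essentially.
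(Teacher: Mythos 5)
Your proof is correct and follows the same skeleton as the paper's: unroll the recursion of Lemma~\ref{lem2} from $V_0=0$, use $\beta \ge 2\kappa^2$ to reduce the contraction factor to $1-\tfrac{2}{i+\beta}$, bound the resulting product by a ratio of squares so that the $(k+\beta)^2$ coming from $\eta_k^2$ cancels, and absorb the remaining slack into the constant $9$. The one place you genuinely diverge is the treatment of the product $\prod_{i=k+1}^{t-1}\bigl(1-\tfrac{2}{i+\beta}\bigr)$: the paper invokes a separate estimate (Lemma~\ref{lem5}, proved via $\ln(1-x)\le -x$ and a Riemann-sum comparison) to obtain $\bigl(\tfrac{k+\beta+1}{t+\beta}\bigr)^2$, whereas you dominate each factor by the perfect square $\bigl(1-\tfrac{1}{i+\beta}\bigr)^2$ and telescope exactly, obtaining $\bigl(\tfrac{k+\beta}{t+\beta-1}\bigr)^2$. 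Your device is more elementary (no auxiliary lemma, no logarithms), is in fact slightly tighter as an intermediate bound, and merely relocates where the $(3/2)^2$ slack is paid: the paper pays it at the cancellation step via $(k+\beta+1)/(k+\beta)\le 3/2$, while you pay it at the very end via $2(t+\beta)\le 3(t+\beta-1)$, i.e.\ $t+\beta\ge 3$ --- both justified by the same hypothesis $\beta\ge 2\kappa^2\ge 2$. The rest checks out, including the empty-product convention at $k=t-1$ and the trivial $t=0$ case.
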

	\begin{proof}
		Define  $\Delta_k = (1 - 2\eta_k \mu + \eta_k^2 L^2)$ for $k\geq 0$ . Using Lemma \ref{lem2}, recursively, we can write
		\begin{align*}
		V_t &\leq \rho_{t-1}^2\Big(\Delta_{t-1}V_{t-1}  + \frac{\eta_{t-1}^2(n-1)}{n}(\sigma^2 + c G_{t-1})\Big) \\
		& \leq \rho_{t-1}^2\Delta_{t-1}\rho_{t-2}^2\Delta_{t-2}V_{t-2}\cr 
		&+\rho_{t-1}^2\rho_{t-2}^2\frac{\eta_{t-2}^2(n-1)}{n}(\sigma^2 + c G_{t-2})\cr 
		&+\rho_{t-1}^2 \frac{\eta_{t-1}^2(n-1)}{n}(\sigma^2 + c G_{t-1})\leq \ldots\cr 
		&\leq \prod_{k=0}^{t-1} \rho_k^2\Delta_k V_0 + 
		\frac{n-1}{n}\sum_{k=0}^{t-1} \eta_k^2(\sigma^2+c\E[G^k])\prod_{i=k+1}^{t-1} \Delta_i \prod_{i=k}^{t-1} \rho_i^2\\
		&= \frac{n-1}{n}\sum_{k=0}^{t-1} \eta_k^2(\sigma^2+c\E[G^k])\prod_{i=k+1}^{t-1} \Delta_i \prod_{i=k}^{t-1} \rho_i^2,
		\end{align*}
		where in the last equality we have used $V_0= 0$.
		By the choice of stepsize and $\beta\geq 2\kappa^2$,  we have 
		\begin{align*}
		\Delta_k &= 1-\frac{4}{(k+\beta)} + \frac{4L^2}{\mu^2 (k+\beta)^2} \cr 
		&\leq 1 - \frac{4}{k+\beta} + \frac{4\kappa^2}{(k+\beta)\beta}\cr &\leq 1 - \frac{4}{k+\beta} + \frac{2}{(k+\beta)} =  1 - \frac{2}{k+\beta}.
		\end{align*}
		Therefore, we have, 
		\begin{align*}
		V_t &\leq \frac{n-1}{n} \sum_{k = 0}^{t-1} \frac{4(\sigma^2+c\E[G^k])}{\mu^2 (k+\beta)^2 } \frac{(k+\beta+1)^2}{(t + \beta)^2}\prod_{i=k}^{t-1} \rho_i^2 \cr 
		&\leq \frac{n-1}{n} \sum_{k = 0}^{t-1} \frac{9(\sigma^2+c\E[G^k])}{\mu^2 (t+\beta)^2}\prod_{i=k}^{t-1} \rho_i^2,
		\end{align*}
		where in the first inequality we have used the valid inequality $\prod_{i=a}^b \left( 1 - \frac{2}{i} \right)\leq \left( \frac{a}{b+1} \right)^{2}$, and in the second inequality we have used $(k+\beta + 1)/(k+\beta)\leq (\beta+1)/\beta \leq 3/2$ since $\beta \geq 2\kappa^2 \geq 2$.
	\end{proof}
	
	\begin{lemma}\label{lem5}
		Let $b \geq a > 2$ be integers. Define $\Phi(a,b) = \prod_{i=a}^b \left( 1 - \frac{2}{i} \right)$. We then have $
		\Phi(a,b) \leq \left( \frac{a}{b+1} \right)^{2}.$
	\end{lemma}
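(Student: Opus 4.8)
The plan is to avoid computing the closed form of the product directly and instead bound each factor individually by a perfect square, so that the whole product telescopes into a square that can be compared against the target. First I would record that since $a > 2$ and the indices run over $i \ge a \ge 3$, every factor $1 - 2/i$ is strictly positive (indeed $\ge 1/3$), so all factor-by-factor comparisons preserve the direction of the inequality.

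The crux is the elementary pointwise bound
\begin{align*}
1 - \frac{2}{i} \le \left(1 - \frac{1}{i}\right)^2,
\end{align*}
which, after multiplying through by $i^2$, reduces to $i^2 - 2i \le i^2 - 2i + 1$, i.e. $0 \le 1$. Applying this to each factor gives
\begin{align*}
\Phi(a,b) = \prod_{i=a}^b \left( 1 - \frac{2}{i} \right) \le \prod_{i=a}^b \left( \frac{i-1}{i} \right)^2 = \left( \prod_{i=a}^b \frac{i-1}{i} \right)^2,
\end{align*}
and the inner product telescopes, since $\prod_{i=a}^b (i-1)/i = (a-1)/b$. Hence $\Phi(a,b) \le \big((a-1)/b\big)^2$.

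Finally I would compare this intermediate bound with the claimed one. Because $(a-1)(b+1) = ab + a - b - 1 \le ab$ exactly when $a \le b+1$, which holds as $a \le b$, we obtain $(a-1)/b \le a/(b+1)$; squaring yields
\begin{align*}
\Phi(a,b) \le \left( \frac{a-1}{b} \right)^2 \le \left( \frac{a}{b+1} \right)^2,
\end{align*}
which is the desired conclusion (in fact the first inequality is a slightly sharper statement).

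I do not anticipate a genuine obstacle here: the argument is entirely elementary, and the only real \emph{idea} is to replace the direct evaluation by the pointwise square bound. The alternative route would be to telescope $\Phi(a,b)$ explicitly to its closed form $(a-2)(a-1)/\big((b-1)b\big)$ and then verify the claimed inequality by cross-multiplication, but that mixes the opposing corrections from numerator and denominator and is more cumbersome than the square-bound approach above.
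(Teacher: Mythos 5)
Your proof is correct, and it takes a genuinely different route from the paper's. The paper passes to logarithms: it writes $\ln \Phi(a,b) = \sum_{i=a}^b \ln\left(1 - \frac{2}{i}\right)$, applies $\ln(1-x) \le -x$, bounds the harmonic sum $\sum_{i=a}^b \frac{1}{i}$ from below by the integral $\int_a^{b+1} \frac{dx}{x}$ (the Riemann sum overstates the integral), and exponentiates to obtain $\left(\frac{a}{b+1}\right)^2$ directly. Your argument stays entirely within algebra: the pointwise bound $1 - \frac{2}{i} \le \left(1 - \frac{1}{i}\right)^2$ makes the product telescope to $\left(\frac{a-1}{b}\right)^2$, and the final comparison $\frac{a-1}{b} \le \frac{a}{b+1}$ (equivalent to $a \le b+1$) finishes the job. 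Your route is more elementary --- no logarithm, no integral comparison --- and it yields the slightly sharper intermediate bound $\left(\frac{a-1}{b}\right)^2 \le \left(\frac{a}{b+1}\right)^2$. What the paper's method buys in exchange is flexibility: the same three lines give $\prod_{i=a}^b \left(1 - \frac{\gamma}{i}\right) \le \left(\frac{a}{b+1}\right)^\gamma$ for any real $\gamma$ with $\gamma < a$, whereas your squaring trick is tied to the exponent being the integer $2$ (though Bernoulli's inequality $\left(1-\frac{1}{i}\right)^c \ge 1 - \frac{c}{i}$ would extend it to any positive integer $c$). Both proofs are sound; yours is self-contained and arguably cleaner for the statement as written.
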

	\begin{proof}[Proof of Lemma \ref{lem5}]
		Indeed,
		\begin{align*}
		\ln (\Phi(a,b))  =  \sum_{i=a}^b \ln \left( 1 - \frac{2}{i} \right)   
		\leq  \sum_{i=a}^b - \frac{2}{i} 
		\leq  -  2\left[ \ln (b+1) - \ln (a) \right].
		\end{align*}
		where we used the inequality $\ln (1-x) \leq -x$ as well as the standard technique of viewing $\sum_{i=a}^b 1/i$ as a Riemann sum for $\int_{a}^{b+1} 1/x ~dx$ and observing that the Riemann sum overstates the integral. Exponentiating both sides now implies the lemma.
	\end{proof}

	\newpage
	
	\section{Additional Numerical Experiments}
	In this section we provide additional experimental results for the regularized logistic regression.
	\subsection{Logistic Regression on a9a Data Set}
	We used the a9a data set from LIBSVM~\cite{chang2011libsvm} and consider logistic regression problem with $l_2$ regularization of order $\frac{1}{n}$. The objective function $F$ to be minimized is
	\begin{align}\label{eq: f-logistic}
	F(\bx) = \frac{1}{N} \sum_{j=1}^N \left( \ln(1+\exp(\bx^\top \mathbf A_j)) - 1_{(b_j = 1)} \bx^\top \mathbf A_j \right) 
	+ \frac{\lambda}{2} \Vert \bx \Vert_2^2,
	\end{align}
	where $\lambda$ is the regularization parameter, $\mathbf A_j \in \R^d$ and $b_j \in \{ 0,1 \}$, $j=1,\ldots,N$ are features (data points) and their corresponding class labels, respectively. The a9a data set consists of $N = 32561$ data points for training with $d = 123$ features. Throughout the experiments we set $\lambda=0.05$.
	
	We performed two sets of experiments for the purpose of comparing different communication strategies in Decentralized Local SGD and evaluating the impact of the number of workers and the communication network structure in Decentralized Local SGD, respectively, following the same schemes as in Section 4.
	
	Specifically, in the first set of experiments we set the number of workers to be $n=20$ and generate connected random communication graphs using an Erd$\ddot{\text{o}}$s-R$\grave{\text{e}}$nyi graph with the probability of connectivity $p = 0.3$ and the  local-degree weights to generate the mixing matrix.
	We use Decentralized Local SGD (Algorithm~\ref{alg1}) to minimize $F(\bx)$ using different communication strategies. We select $T=1000$ iterations, and the step-size sequence $\eta_t = 2/\mu(t+\beta)$ with $\beta=1,\mu=\lambda=0.05$. We start each simulation from the initial point of $\bx^0 = \mathbf 0_d$ and repeat each simulation $20$ times. The average of the results are reported in Figures \ref{fig: res4}(a) and \ref{fig: res4}(b).
	
	\begin{figure}[t!]
		\centering
		\begin{subfigure}[b]{0.49\linewidth}
			\includegraphics[width=\linewidth]{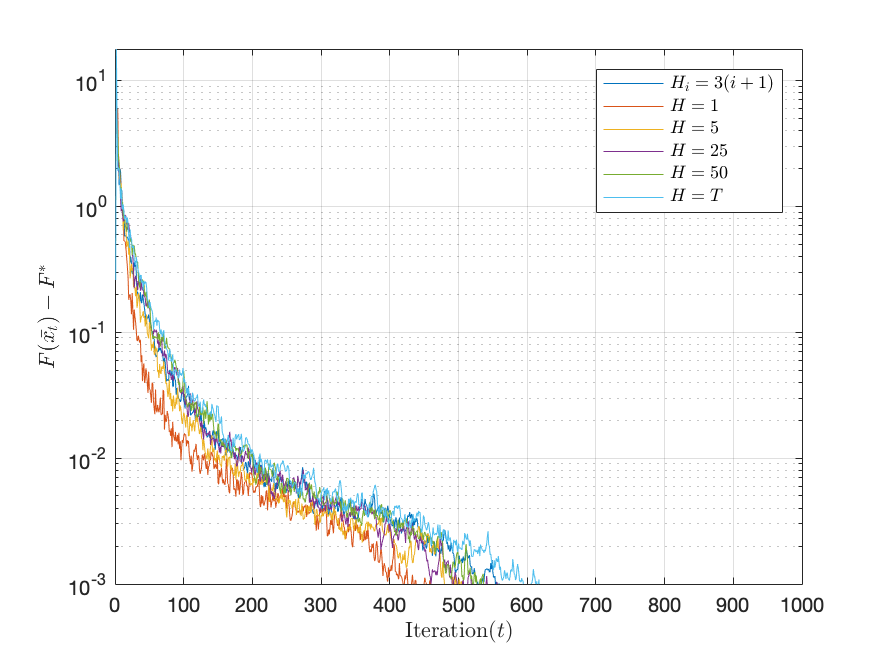}
			\caption{Error over iterations.}
		\end{subfigure}
		\begin{subfigure}[b]{0.49\linewidth}
			\includegraphics[width=\linewidth]{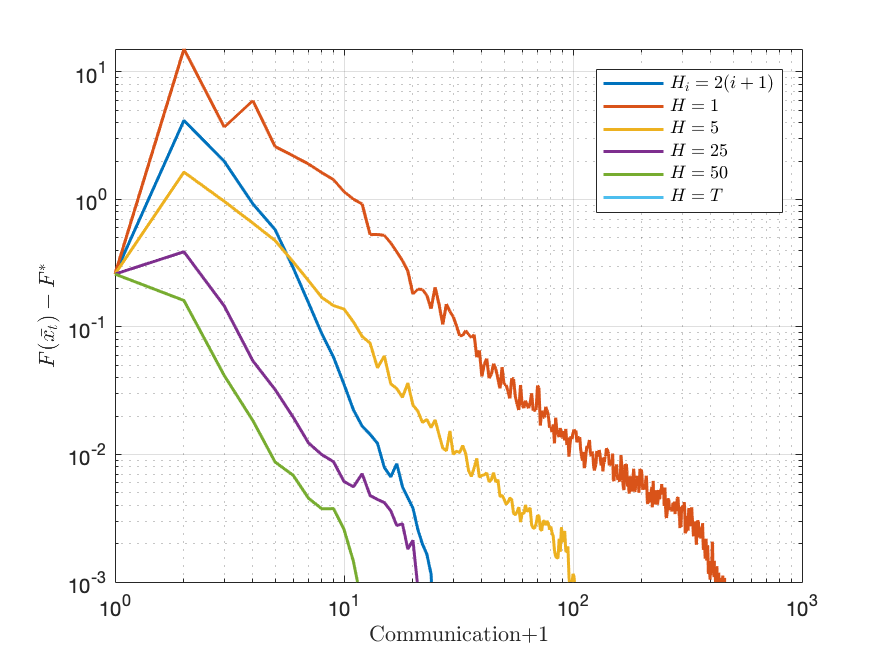}
			\caption{Error over communications.}
		\end{subfigure}
		\caption{Local SGD with different communication strategies for $l_2$ regularized logistic regression with $F(\bx)$ defined in \eqref{eq: f-logistic}, $\mu=\lambda=0.05, \beta=1$. Figures (a) and (b) show the error of different communication methods over iteration and communication round, respectively, with a fixed network size of $n=20$.}
		\label{fig: res4}
	\end{figure}
	
	Figure \ref{fig: res4}(a) shows that all the communication strategies share similar behavior considering error over iterations. This may be due to the fact that for the noise variance the strong-growth condition (Assumption \ref{as: g}) is not satisfied with a significant coefficient $c\ge 0$. Figure \ref{fig: res4}(b) shows that when considering error over communications, our proposed strategy remains one of the most competitive.
	
	\begin{figure}[t!]
		\centering
		\begin{subfigure}[b]{0.49\linewidth}
			\includegraphics[width=\linewidth]{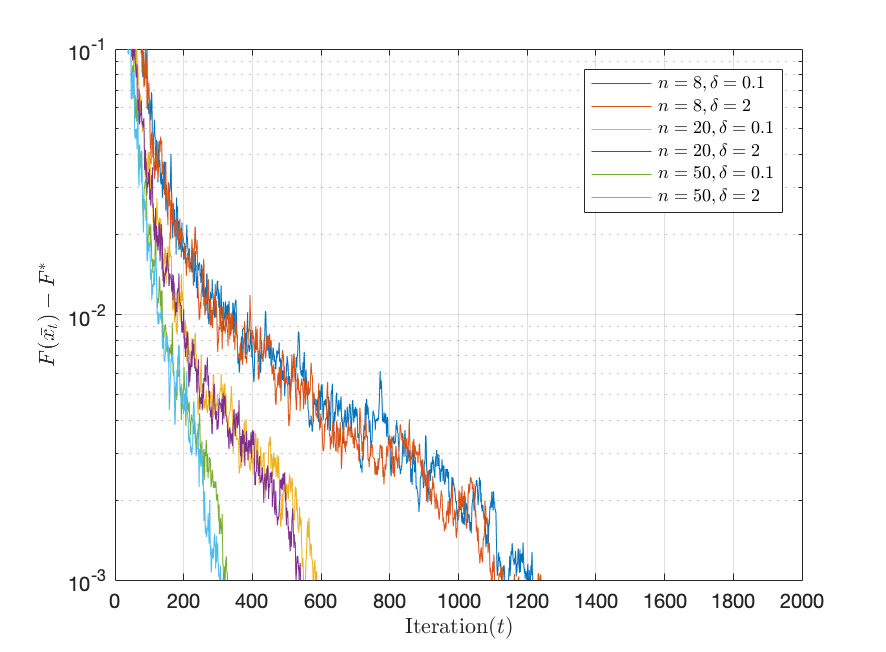}
			\caption{Error over iterations.}
		\end{subfigure}
		\begin{subfigure}[b]{0.49\linewidth}
			\includegraphics[width=\linewidth]{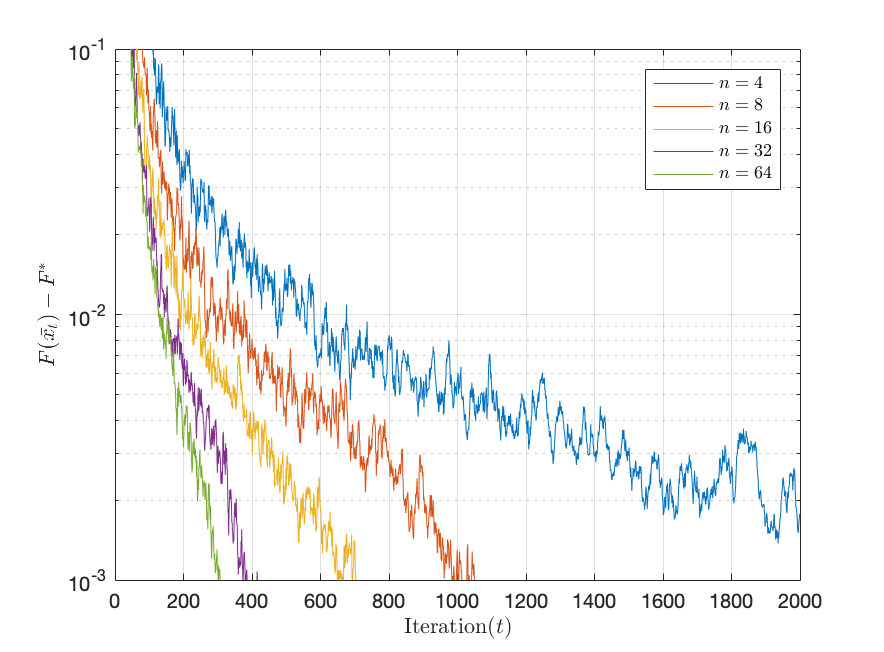}
			\caption{Error over communications.}
		\end{subfigure}
		\caption{The convergence of Local SGD for $l_2$ regularized logistic regression with the communication method proposed in this paper ($2n$ communication rounds) for various Erd$\ddot{\text{o}}$s-R$\grave{\text{e}}$nyi graphs with different number of workers and the probability of connectivity (a) and various path graphs with different number of workers (b).}
		\label{fig: res5}
	\end{figure}
	
	To evaluate the impact of the number of workers and the communication network structure in Decentralized Local SGD, we generate two sets of communication graphs: various Erd$\ddot{\text{o}}$s-R$\grave{\text{e}}$nyi graphs with different number of workers and the probability of connectivity $p = \bar{d}/n$, where $\bar{d}$ is the average degree of nodes and $\bar{d} = (1+\delta)\ln n$ with $\delta=0.1$, indicating a sparse Erd$\ddot{\text{o}}$s-R$\grave{\text{e}}$nyi graph, or $\delta=2$, indicating a dense Erd$\ddot{\text{o}}$s-R$\grave{\text{e}}$nyi graph; various path graphs with different number of workers. We use the  local-degree weights to generate the mixing matrix.
	
	We use Decentralized Local SGD (Algorithm~\ref{alg1}) to minimize $F(\bx)$ and set the communication strategy to be varying intervals with the number of communication rounds $R=2n$. We select $T=2000$ iterations, and the step-size sequence $\eta_t = 2/\mu(t+\beta)$ with $\beta=1,\mu=\lambda=0.05$. We start each simulation from the initial point of $\bx^0 = \mathbf 0_d$ and repeat each simulation $20$ times. The average of the results are reported in Figures \ref{fig: res5}(a) and \ref{fig: res5}(b). 
	
	Figures \ref{fig: res5}(a)(b) show similar patterns as Figures 2(a)(b), and further verify that while graph connectivity may not be the major impact factor of the convergence speed of Decentralized Local SGD, linear-speed up in the number of workers can be expected with only $R = 2n$ communication rounds.
	
\end{document}